\documentclass{article}

 \usepackage[preprint]{neurips_2026}

\usepackage{amsmath,amssymb,amsthm,mathtools,amsfonts}
\usepackage[utf8]{inputenc}
\usepackage[T1]{fontenc}
\usepackage{lmodern}
\usepackage{microtype}
\usepackage{xcolor}
\usepackage{booktabs}
\usepackage{nicefrac}
\usepackage{url}
\usepackage{enumitem}
\usepackage{bbm}
\usepackage[colorlinks=true,linkcolor=blue,citecolor=blue,urlcolor=blue]{hyperref}
\usepackage{cleveref}
\usepackage{subcaption}
\usepackage{graphicx}
\usepackage{caption}
\usepackage{multirow}

\newtheorem{definition}{Definition}
\newtheorem{assumption}{Assumption}
\newtheorem{theorem}{Theorem}
\newtheorem{lemma}{Lemma}

\newcommand{\E}{\mathbb{E}}
\newcommand{\doop}[1]{\operatorname{do}(#1)}
\newcommand{\State}{\mathcal{S}}
\newcommand{\Act}{\mathcal{A}}
\newcommand{\SigmaSet}{\Sigma}
\newcommand{\GoalsN}{\Psi}
\newcommand{\GoalsAux}{\Psi_{\mathcal{F}}}
\newcommand{\Mem}{\mathcal{M}}

\title{What Must Generalist Agents Remember?}

\author{%
  Khurram Yamin\thanks{Equal contribution.} \\
  Carnegie Mellon University \\
  \texttt{kyamin@andrew.cmu.edu}
  \And
  Namrata Deka\footnotemark[1] \\
  Carnegie Mellon University \\
  \texttt{ndeka@andrew.cmu.edu}
  \And
  Maitreyi Swaroop\footnotemark[1] \\
  Carnegie Mellon University \\
  \texttt{mswaroop@andrew.cmu.edu}
  \AND
  Albert Ting \\
  Georgia Institute of Technology \\
  \texttt{ating7@gatech.edu}
  \And
  Jeff Schneider \\
  Carnegie Mellon University \\
  \texttt{jeff4@andrew.cmu.edu}
  \And
  Bryan Wilder \\
  Carnegie Mellon University \\
  \texttt{bwilder@andrew.cmu.edu}
}
\begin{document}

\maketitle

\begin{abstract}
This paper develops a formal account of what generalist agents must store in memory in order to act near-optimally across multiple environments and goals. It shows that when two domains share an observational bottleneck but require incompatible optimal actions, any uniformly near-optimal policy must induce distinct memory distributions at that bottleneck. The result yields a separation theorem: sufficiently successful agents cannot rely only on current state observations, but must preserve domain-relevant information in memory. The paper further shows that if an agent’s memory contains enough information to estimate values for related goals, then that memory can be used to approximately reconstruct the agent’s local transition dynamics. Together, these results characterize memory as the substrate that supports domain disambiguation, transition-model reconstruction, and planning for generalist agents.
\end{abstract}
\section{Introduction}\label{sec:intro}
Reinforcement learning is inherently causal: an agent's action is an intervention on the state trajectory, and the rewards it collects are downstream effects of that intervention. 
As we work towards general RL agents, we must contend with interventions not only \textit{by} the agent itself, but also \textit{on} its environment: the agent must pursue diverse goals within a single domain, and it must maintain competence when exogenous shifts alter the domain's dynamics. These two axes -- goal generality and domain robustness -- jointly define the setting we study.  
Under the assumption that the reward-producing mechanism is shared across domains, we ask:
\emph{what must such an agent's internal state encode about the world?}

Recent works in causal RL ask what an agent's \textit{policy} must identify in order to perform well across a family of tasks or environments.  
\cite{richens2025general} show that any agent satisfying a regret bound across a sufficiently rich family of multi-step goals in a fixed environment must have learned an approximate model of that environment's transitions. 
 \cite{richens2024robust} show that any agent satisfying a regret bound across a sufficiently rich family of interventions on the data generating process must have learned an approximate causal model, and \cite{ceriscioliagents} extend this to the mediated setting in sequential decision problems.
In each case, the agent is formalized as a mapping from a known task or domain index to a policy, and the result shows that this mapping must encode a world model that can be read off by querying the agent at different inputs.

We study a different setting, where the agent receives no domain index: it sees only its own trajectory, and must infer whatever it needs about the domain from online experience. The structure is not explicitly provided, but must be implicitly carried in the agent's memory.
This raises different questions from the abovementioned work, and ones we answer in this paper: \textit{what information must the agent's memory carry in order to act competently across goals and domains?} Further, \textit{when is that information enough to recover the domain's interventional dynamics?}
We address this question in a sequential, mediated setting. Our first result (\Cref{thm:separation}) is a memory-level necessary condition: whenever two domains require different actions at the same state, a uniformly near-optimal agent's memory must distinguish them at that state. This tells us that memory \textit{must} separate such domains, but not \textit{how}. Our second result (\Cref{thm:decoder}) provides the missing structure: if the goal family is rich enough for the memory to predict near-optimal action-values for a class of one-step predictive probes, then the memory admits a decoder of each domain's local interventional kernel. Both results constrain the same memory abstraction in the same multi-domain setting.

The paper is organized as follows: \Cref{sec:related} discusses the related work in causal RL and world model necessity. \Cref{sec:method} introduces the formal setup, assumptions, and notation, as well as our main results on representation separation (\Cref{thm:separation}) and decoder existence (\Cref{thm:decoder}). \Cref{sec:experiments} demonstrates the theorems' predictions in a gridworld. \Cref{sec:discussion} concludes by summarizing our results and discussing important questions for future work.






\section{Related Work}\label{sec:related}

\paragraph{World model necessity.} The most closely related line of work asks what a sufficiently competent agent must have learned about its environment. The classical statement is the good-regulator theorem \citep{conant1970every}: every good regulator of a system must be a model of that system. Recent work develop this for decision-making agents. \citet{richens2024robust} show that any agent achieving low regret across a family of interventions on the data generating process must have learned an approximate causal model of that process, with \citet{ceriscioliagents} extending this to sequential decision problems under mediation. In a parallel work, \citet{richens2025general} show that any agent satisfying regret bounds across a sufficiently rich family of multi-step goals in a fixed environment must have learned an approximate transition model of that environment. In each case the agent is formalized as a mapping from a known task or domain index to a policy, and the world model is read off by querying that mapping at different inputs. Our setting removes this explicit index: the agent observes only its own trajectory and so any world model information must be contained implicitly in memory rather than in the agent's input. This shifts the object of analysis from the task-to-policy mapping to the history/trajectory-to-memory map, and demands a memory-level necessity statement. 
\paragraph{Memory and latent context in RL.}
Our setting -- a family of MDPs that share structure but differ in dynamics through an unobserved factor that the agent must infer from experience -- is closely related to established formalisms in RL. It can be viewed as an instantiation of a partially observable MDP (POMDP) \citep{kaelbling1998planning}, and relates to formalisms for hidden-context inference such as Bayesian-adaptive MDPs \citep{duff2002optimal} and hidden-parameter MDPs \cite{doshi2016hidden} when the domain admits a parametric description.
A substantial empirical literature proposes methods for the resulting problem of inferring the latent context from trajectories. Recurrence-based methods \citep{hausknecht2015deep, duan2016rl2fastreinforcementlearning}, which we use as a baseline, carry context implicitly in an RNN's hidden state. Context-based methods \citep{rakelly2019efficient,zintgraf2021varibad} instead learn an explicit latent task variable with a dedicated inference objective. 
This literature proposes architectures and training objectives for obtaining agents that succeed when the relevant context is hidden.
Our contribution is complementary -- we do not propose an algorithm; instead we ask what \textit{any} agent that succeeds uniformly across this family must encode, and under what additional conditions that encoding admits a decoder of the local intervention dynamics.
\paragraph{Causal RL and distribution shift.}
More broadly, our work relates to causal approaches to reinforcement learning under distribution shift. Block MDPs \citep{du2019provably} and invariant prediction in block MDPs \citep{zhang2020invariant} use causal structure to identify state representations that generalize across environments; causal representation learning \citep{scholkopf2021toward} studies the recovery of latent causal variables from observation; and the transportability literature characterizes when causal effects identified in one domain transfer to another. These works are largely concerned with \textit{identification} given assumptions about structure; ours is concerned with what an agent's memory must encode in order to act competently when that structure is not given but must be inferred.
\section{Methodology}
\label{sec:method}

We study a goal-conditioned agent that must act near-optimally across both goals and domains. A domain, indexed by $\sigma\in\SigmaSet$, specifies an environment condition that may change the transition dynamics. The agent does not observe $\sigma$ directly, so any domain information needed for action selection must be inferred from its history. Equivalently, the setting is a latent-domain partially observable Markov decision process (POMDP). At the start of an episode, a domain $\sigma$ is selected but not revealed to the agent. Conditional on this domain, the process over observed states is a Markov decision process (MDP) with state $S_t$. Thus, the only hidden variable is the fixed domain index. If the raw observation is not Markov within a fixed domain, then $S_t$ denotes the augmented observed state for which the fixed-domain MDP description holds. The main question is what information the agent must store in memory to act well across domains and goals. We first prove a control-theoretic separation result: if two domains require different value-optimal actions at the same state for the same goal, then any uniformly near-optimal goal-conditioned policy must encode information that distinguishes those domains. This separation result does not identify the transition kernel. We then add auxiliary probe goals and a value-sufficiency condition under which the same memory representation supports approximate decoding of the local interventional dynamics.

\subsection{Setup}

We model interaction in discrete time with state space $\State$, action space $\Act$, and discount factor $\gamma\in[0,1)$. Let $\mathcal{P}(\State)$ denote the set of probability measures on $\State$. At time $t$, the agent observes $S_t\in\State$ and chooses $A_t\in\Act$. For each domain $\sigma\in\SigmaSet$, the initial state is drawn from $\mu_\sigma$, and transitions are governed by an interventional kernel $P^\sigma$. We write $\doop{A_t=a}$ to mean that action $a$ is externally set at time $t$. For any state $s\in\State$, action $a\in\Act$, and measurable set $B\subseteq\State$, define
\[
P^\sigma(B\mid s,a)
:=
\Pr_\sigma(S_{t+1}\in B\mid S_t=s,\doop{A_t=a}).
\]
Thus, $P^\sigma(\cdot\mid s,a)\in\mathcal{P}(\State)$ is the next-state distribution after action $a$ at state $s$ in domain $\sigma$. The kernel is not an observational distribution over actions; it describes the next-state law once an action has been fixed. Let $\GoalsN$ be the family of goals used in the separation result. Each goal $\psi\in\GoalsN$ is specified directly by a bounded per-transition utility $u_\psi:\State\times\State\to\mathbb{R}$. We assume there is $U_{\max}<\infty$ such that $|u_\psi(s,s')|\le U_{\max}$ for all $\psi\in\GoalsN$ and all $(s,s')\in\State\times\State$. These utilities are part of the goal specification and are not indexed by the domain; domain dependence enters through $\mu_\sigma$ and $P^\sigma$. Before choosing $A_t$, the agent observes history $h_t=(S_0,A_0,S_1,A_1,\ldots,A_{t-1},S_t)$. Given goal $\psi$, its policy is $\pi(\cdot\mid h_t;\psi)$. Hence, in domain $\sigma$, trajectories evolve as $S_0\sim\mu_\sigma$, $A_t\sim\pi(\cdot\mid h_t;\psi)$, and $S_{t+1}\sim P^\sigma(\cdot\mid S_t,A_t)$.
For policy $\pi$, domain $\sigma$, goal $\psi$, and initial state $s_0$, define
\[
V^{\pi,\psi}_{\sigma}(s_0)
:=
\E_{\pi,\sigma}\!\left[
\sum_{t=0}^{\infty}\gamma^t u_\psi(S_t,S_{t+1})
\,\middle|\,
S_0=s_0
\right],
\]
and
\[
V^{\star,\psi}_{\sigma}(s_0)
:=
\sup_{\pi}V^{\pi,\psi}_{\sigma}(s_0).
\]
For state-action values, define
\[
Q^\star_{\sigma,\psi}(s,a)
:=
\sup_{\pi}
\E_{\pi,\sigma}\!\left[
\sum_{k=0}^{\infty}\gamma^k u_\psi(S_k,S_{k+1})
\,\middle|\,
S_0=s,\doop{A_0=a}
\right].
\]
The intervention $\doop{A_0=a}$ fixes only the first action; later actions are chosen by $\pi$. When probabilities or expectations are conditioned on $s_0$, the process is started from $S_0=s_0$. The agent stores history in memory $M_t=f(h_t)\in\Mem$. The policy acts through a shared policy head $\pi_M$, so $\pi(a_t\mid h_t;\psi)=\pi_M(a_t\mid M_t,\psi)$. Therefore, two histories with the same memory state and goal induce the same action distribution.

\subsection{Control-theoretic notion of relevance}

We care only about domain differences that matter for action selection. Two domains may have different transition kernels but remain interchangeable if they induce the same near-optimal actions for every relevant goal. Fix $\varepsilon_{\mathrm{ctrl}}\ge 0$. For each domain $\sigma$, goal $\psi$, and state $s$, define the set of near-optimal actions:
\[
\mathcal{A}^{\varepsilon_{\mathrm{ctrl}}}_{\sigma,\psi}(s)
:=
\left\{
a\in\Act:
Q^\star_{\sigma,\psi}(s,a)
\ge
\sup_{a'\in\Act}Q^\star_{\sigma,\psi}(s,a')
-
\varepsilon_{\mathrm{ctrl}}
\right\}.
\]

\begin{definition}[Control equivalence]
\label{def:control_equiv}
Two domains $\sigma,\tilde{\sigma}\in\SigmaSet$ are \emph{control-equivalent} relative to $\GoalsN$ if $\mathcal{A}^{\varepsilon_{\mathrm{ctrl}}}_{\sigma,\psi}(s)=\mathcal{A}^{\varepsilon_{\mathrm{ctrl}}}_{\tilde{\sigma},\psi}(s)$ for every goal $\psi\in\GoalsN$ and every state $s\in\State$.
\end{definition}

Non-equivalence alone does not always force memory separation: if the near-optimal action sets overlap, one shared action may work in both domains. The separation result therefore uses domain pairs with disjoint value-gap-separated action sets at a shared bottleneck.

\begin{assumption}[Shared bottleneck with disjoint action sets]
\label{ass:shared_bottleneck}
Let $\mathcal{D}_{\mathrm{inc}}\subseteq\SigmaSet\times\SigmaSet$ denote a specified collection of domain pairs for which separation is required. For each pair $(\sigma,\tilde{\sigma})\in\mathcal{D}_{\mathrm{inc}}$, assume there exist a goal $\psi\in\GoalsN$, a state $s^\dagger\in\State$, times $t_\sigma,t_{\tilde{\sigma}}\in\{0,\ldots,T\}$, and nonempty measurable action sets $G_\sigma,G_{\tilde{\sigma}}\subseteq\Act$ satisfying the conditions below, where the constants $\rho>0$, $T\in\mathbb{N}$, $\bar\varepsilon_{\mathrm{opt}}\ge 0$, and $\varepsilon_{\mathrm{gap}}>0$ are uniform over all pairs in $\mathcal{D}_{\mathrm{inc}}$.

\begin{enumerate}[leftmargin=1.25em]
\item \textbf{Any sufficiently value-near-optimal policy reaches the bottleneck.}
For each $\xi\in\{\sigma,\tilde{\sigma}\}$, every policy $\pi$ satisfying
\[
V^{\star,\psi}_{\xi}(s_0)-V^{\pi,\psi}_{\xi}(s_0)
\le
\bar\varepsilon_{\mathrm{opt}}
\]
also satisfies $\Pr_{\xi}(S_{t_\xi}=s^\dagger\mid \pi,s_0)\ge \rho$.

\item \textbf{The action sets are disjoint and separated by a value gap.}
The sets $G_\sigma$ and $G_{\tilde{\sigma}}$ are disjoint. Moreover, for each $\xi\in\{\sigma,\tilde{\sigma}\}$, every action in $G_\xi$ is better by at least $\varepsilon_{\mathrm{gap}}$ than every action outside $G_\xi$ at $(s^\dagger,\psi)$:
\[
Q^\star_{\xi,\psi}(s^\dagger,a)
\ge
Q^\star_{\xi,\psi}(s^\dagger,b)
+
\varepsilon_{\mathrm{gap}}
\qquad
\forall a\in G_\xi,\ \forall b\notin G_\xi.
\]
\end{enumerate}
\end{assumption}

Thus, near-optimal behavior for each pair must pass through the same bottleneck state, but the value-gap-separated action sets at that state are disjoint across the two domains. The times $t_\sigma$ and $t_{\tilde{\sigma}}$ need not be equal; the shared bottleneck is the state $s^\dagger$, not necessarily a common clock time.

\subsection{Representation separation}

\begin{theorem}[Uniform cross-domain competence forces memory separation]
\label{thm:separation}
Assume the latent-domain model described above and Assumption~\ref{ass:shared_bottleneck}. Suppose a single goal-conditioned policy $\pi$ is uniformly value-near-optimal across all goals and domains: for some $\varepsilon_{\mathrm{opt}}\in[0,\bar\varepsilon_{\mathrm{opt}}]$,
\[
V^{\star,\psi'}_{\sigma}(s_0)-V^{\pi,\psi'}_{\sigma}(s_0)
\le
\varepsilon_{\mathrm{opt}}
\qquad
\forall \psi'\in\GoalsN,\ \forall \sigma\in\SigmaSet.
\]
Let $\varepsilon_0:=\tfrac12\rho\gamma^T\varepsilon_{\mathrm{gap}}$. If $\varepsilon_{\mathrm{opt}}<\varepsilon_0$, then for every specified pair $(\sigma,\tilde{\sigma})\in\mathcal{D}_{\mathrm{inc}}$ satisfying the shared-bottleneck conditions of Assumption~\ref{ass:shared_bottleneck}, the policy's memory must separate those domains at the corresponding bottleneck times.

More precisely, let $\psi$, $s^\dagger$, $t_\sigma$, $t_{\tilde{\sigma}}$, $G_\sigma$, and $G_{\tilde{\sigma}}$ be the witnesses from Assumption~\ref{ass:shared_bottleneck}. Then the conditional memory laws at the possibly different times $t_\sigma$ and $t_{\tilde{\sigma}}$ satisfy
\[
\Pr_\sigma(M_{t_\sigma}\in\cdot\mid S_{t_\sigma}=s^\dagger,\pi,s_0)
\neq
\Pr_{\tilde{\sigma}}(M_{t_{\tilde{\sigma}}}\in\cdot\mid S_{t_{\tilde{\sigma}}}=s^\dagger,\pi,s_0).
\]

In fact, writing
\[
\eta_\sigma:=\frac{\varepsilon_{\mathrm{opt}}}{\rho\gamma^{t_\sigma}\varepsilon_{\mathrm{gap}}},
\qquad
\eta_{\tilde{\sigma}}:=\frac{\varepsilon_{\mathrm{opt}}}{\rho\gamma^{t_{\tilde{\sigma}}}\varepsilon_{\mathrm{gap}}},
\]
we have
\[
\mathrm{TV}\!\left(
\Pr_\sigma(M_{t_\sigma}\in\cdot\mid S_{t_\sigma}=s^\dagger,\pi,s_0),
\Pr_{\tilde{\sigma}}(M_{t_{\tilde{\sigma}}}\in\cdot\mid S_{t_{\tilde{\sigma}}}=s^\dagger,\pi,s_0)
\right)
\ge
\left[
1-\eta_\sigma-\eta_{\tilde{\sigma}}
\right]_+,
\]
where $\mathrm{TV}$ denotes total variation distance and $[x]_+:=\max\{x,0\}$.
\end{theorem}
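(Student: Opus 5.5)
The plan is to convert the value-suboptimality bound into a bound on how much probability mass the policy puts on ``wrong'' actions at the bottleneck, and then use the shared policy head to transfer that into a total variation bound on memory laws. Fix a pair $(\sigma,\tilde\sigma)$ with witnesses $\psi,s^\dagger,t_\sigma,t_{\tilde\sigma},G_\sigma,G_{\tilde\sigma}$. For $\xi\in\{\sigma,\tilde\sigma\}$, define the conditional memory law $\nu_\xi:=\Pr_\xi(M_{t_\xi}\in\cdot\mid S_{t_\xi}=s^\dagger,\pi,s_0)$; this is well defined because $\varepsilon_{\mathrm{opt}}\le\bar\varepsilon_{\mathrm{opt}}$ and condition 1 of Assumption~\ref{ass:shared_bottleneck} give $\Pr_\xi(S_{t_\xi}=s^\dagger)\ge\rho>0$. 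Since the action depends only on $(M_t,\psi)$, the conditional probability of choosing an action in $G_\xi$ at the bottleneck is $\int \pi_M(G_\xi\mid m,\psi)\,\nu_\xi(dm)$.

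\textbf{Step 1 (performance-difference at the bottleneck).} I would show
\[
\Pr_\xi\bigl(A_{t_\xi}\notin G_\xi\mid S_{t_\xi}=s^\dagger\bigr)\le \eta_\xi .
\]
To do this, I would decompose the regret $V^{\star,\psi}_\xi(s_0)-V^{\pi,\psi}_\xi(s_0)$ via the standard telescoping (performance-difference) identity as
\[
\sum_t\gamma^t\,\E_{\pi,\xi}\!\left[\sup_{a'}Q^\star_{\xi,\psi}(S_t,a')-Q^\star_{\xi,\psi}(S_t,A_t)\right],
\]
or at least lower bound the regret by the single term $t=t_\xi$. Each summand is nonnegative, because the step-$t$ gap $V^\star(S_t)-Q^\star(S_t,A_t)\ge0$ and the remaining terms telescope against the continuation values. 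Restricting to the event $\{S_{t_\xi}=s^\dagger\}$, which has probability at least $\rho$, condition 2 says any action $b\notin G_\xi$ has $\sup_{a'}Q^\star(s^\dagger,a')-Q^\star(s^\dagger,b)\ge\varepsilon_{\mathrm{gap}}$, since $G_\xi$ is nonempty and every member beats $b$ by $\varepsilon_{\mathrm{gap}}$. Combining these gives
\[
\varepsilon_{\mathrm{opt}}\ge \gamma^{t_\xi}\rho\,\varepsilon_{\mathrm{gap}}\Pr(A_{t_\xi}\notin G_\xi\mid S_{t_\xi}=s^\dagger),
\]
which is the claimed bound. This is the main obstacle. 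History-dependent policies and a possibly non-attained supremum (infinite $\Act$) require care in the telescoping. I would first try the clean version $V^\star_\xi(s_0)-V^\pi_\xi(s_0)=\sum_t\gamma^t\E_\pi[V^\star(S_t)-Q^\star(S_t,A_t)]$, justified because $Q^\star(s,a)=\E[u+\gamma V^\star(S')]$ under the Markov kernel $P^\xi$, with boundedness of $u_\psi$ ensuring absolute convergence.

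\textbf{Step 2 (TV bound).} Let $g(m):=\pi_M(G_\sigma\mid m,\psi)$, a measurable function with values in $[0,1]$. Step 1 gives $\int g\,d\nu_\sigma\ge1-\eta_\sigma$. Since $G_\sigma\cap G_{\tilde\sigma}=\emptyset$, we have $g\le 1-\pi_M(G_{\tilde\sigma}\mid m,\psi)$, so $\int g\,d\nu_{\tilde\sigma}\le\eta_{\tilde\sigma}$. Using the test function $g$,
\[
\mathrm{TV}(\nu_\sigma,\nu_{\tilde\sigma})\ge\int g\,d\nu_\sigma-\int g\,d\nu_{\tilde\sigma}\ge 1-\eta_\sigma-\eta_{\tilde\sigma},
\]
and TV is nonnegative, which yields the $[\cdot]_+$ form.

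\textbf{Step 3 (qualitative separation).} Since $t_\xi\le T$, we have $\eta_\xi\le\varepsilon_{\mathrm{opt}}/(\rho\gamma^T\varepsilon_{\mathrm{gap}})<\tfrac12$ whenever $\varepsilon_{\mathrm{opt}}<\varepsilon_0$. Hence $1-\eta_\sigma-\eta_{\tilde\sigma}>0$, so the TV distance is strictly positive and the two memory laws differ.
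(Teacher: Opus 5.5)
Your proof is correct, and your Steps 2 and 3 match the paper's: the same test function $\pi_M(G_\sigma\mid m,\psi)$, the same disjointness bound $\pi_M(G_\sigma\mid m,\psi)\le 1-\pi_M(G_{\tilde\sigma}\mid m,\psi)$, and the same use of $\gamma^{t_\xi}\ge\gamma^T$.

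The difference is in Step 1, which the paper isolates as Lemma~\ref{lem:action_concentration}. The paper does not use the global performance-difference identity. Instead it builds an explicit comparison policy $\hat\pi_\nu$ that copies $\pi$ up to time $t_\xi-1$. Only on the event $\{S_{t_\xi}=s^\dagger\}$ does $\hat\pi_\nu$ deviate, playing a $\nu$-best action $a_\nu\in G_\xi$ followed by a $\nu$-optimal continuation. The paper then compares $V^{\hat\pi_\nu,\psi}_\xi(s_0)$ with $V^{\pi,\psi}_\xi(s_0)$ directly and lets $\nu\downarrow 0$.

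Your route has two advantages. You compare against $V^{\star,\psi}_\xi(s^\dagger)=\sup_{a'}Q^\star_{\xi,\psi}(s^\dagger,a')$ directly, so you need no approximate maximizer $a_\nu$. It also gives more: the regret controls the discounted advantage gaps at every time and state, not just at the bottleneck.

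Its cost is that your identity relies on $Q^\star_{\xi,\psi}(s,a)\ge\int[u_\psi(s,s')+\gamma V^{\star,\psi}_\xi(s')]\,P^\xi(ds'\mid s,a)$ at every visited state. This is exactly the direction you need for regret to dominate the sum of gaps. It requires gluing near-optimal continuations from every next state $s'$, which is immediate for countable $\State$ but needs a measurable-selection argument on a general measurable state space. The paper's single-state deviation needs only one near-optimal continuation, from $(s^\dagger,a_\nu)$, and so sidesteps this. Your stated fallback of lower-bounding the regret by the single term at $t_\xi$ is, once made rigorous, exactly the paper's argument.
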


If two domains require disjoint value-gap-separated actions at the same bottleneck state, then any uniformly near-optimal agent with a shared policy head must distinguish those domains in memory. The comparison is not synchronous in clock time: it compares the conditional memory distribution in each domain after reaching the shared state $s^\dagger$ at that domain's bottleneck time, $t_\sigma$ or $t_{\tilde{\sigma}}$. This goes beyond the usual POMDP observation that a belief state can represent latent context: POMDP theory provides a sufficient statistic for optimal control, whereas Theorem~\ref{thm:separation} gives a necessity result for learned representations. The result applies to any history-to-memory map, without assuming Bayesian belief updates, a correct latent-state model, or an explicit context variable. Near-optimality itself forces the two conditional memory distributions to separate, with a total-variation lower bound determined by the bottleneck reach probability, discounting, value gap, and optimality error. Thus, when hidden domain identity is control-relevant, it must be encoded internally.
\subsection{From memory separation to a decodable local dynamics model}

The separation result shows that memory must retain decision-relevant domain information. It does not imply that the full transition kernel is decodable from memory. Decoding requires auxiliary probe goals whose values test one-step predictions, plus memory sufficient for those auxiliary values. A local history $h_t$ is reachable in domain $\sigma$ if it can occur with nonzero probability under some admissible policy in that domain. For a reachable history $h_t$, let $s_t(h_t)$ denote its final state. Let $\mathcal{F}\subseteq\{\phi:\State\to[-1,1]\}$ be a class of bounded test functions. For probability measures $\nu,\tilde{\nu}\in\mathcal{P}(\State)$, define the integral probability metric
\[
d_{\mathcal{F}}(\nu,\tilde{\nu})
:=
\sup_{\phi\in\mathcal{F}}
\left|
\int \phi\,d\nu
-
\int \phi\,d\tilde{\nu}
\right|.
\]
All local decoder errors below are measured in this metric. Let $\GoalsAux$ be an auxiliary family of probe goals. Each auxiliary goal $\bar{\psi}\in\GoalsAux$ has its own bounded utility $u_{\bar{\psi}}:\State\times\State\to\mathbb{R}$, with value functions defined as above after replacing $\psi$ by $\bar\psi$. These goals are used only to test local transition dynamics.

\begin{definition}[Goal-induced test class]
\label{def:testclass}
The class $\mathcal{F}$ is $\varepsilon_{\mathrm{span}}$-generated by $\GoalsAux$ if, for every $\phi\in\mathcal{F}$, there exist an auxiliary goal $\bar{\psi}_\phi\in\GoalsAux$ and a bounded proxy function $W_\phi:\State\to[-1,1]$ such that $\|W_\phi-\phi\|_\infty\le \varepsilon_{\mathrm{span}}$, and, for every domain $\sigma$, reachable state $s$, and action $a$,
\[
Q^\star_{\sigma,\bar{\psi}_\phi}(s,a)
=
\int W_\phi(s')\,P^\sigma(ds'\mid s,a).
\]
\end{definition}

Thus, auxiliary goals act as one-step predictive probes: their optimal action-values equal expectations of test functions under the true interventional kernel.

\begin{definition}[Value-sufficient memory]
\label{def:valuesufficient}
Memory is $\varepsilon_{\mathrm{val}}$-value-sufficient for $\GoalsAux$ if there exists a value head $q_M:\Mem\times\GoalsAux\times\Act\to\mathbb{R}$ such that, for every domain $\sigma$, every reachable history $h_t$ in that domain, every auxiliary goal $\bar{\psi}\in\GoalsAux$, and every action $a\in\Act$,
\[
\left|
q_M(f(h_t),\bar{\psi},a)
-
Q^\star_{\sigma,\bar{\psi}}(s_t(h_t),a)
\right|
\le
\varepsilon_{\mathrm{val}}.
\]
\end{definition}
Intuitively, $\varepsilon_{\mathrm{val}}$-value-sufficiency says that the memory state $M_t=f(h_t)$ retains all information needed to predict the optimal auxiliary probe values at the current local situation. The value head $q_M$ need not know the domain label $\sigma$ or the full history $h_t$ explicitly; any domain information relevant to the auxiliary values must already be encoded in $M_t$. Thus, if two reachable histories have the same memory state, then the auxiliary probes cannot assign very different optimal action-values to them, up to error $\varepsilon_{\mathrm{val}}$.
\begin{assumption}[Goal-induced local test class]
\label{ass:test_rich}
There exist a test class $\mathcal{F}\subseteq\{\phi:\State\to[-1,1]\}$ and an auxiliary goal family $\GoalsAux$ such that $\mathcal{F}$ is $\varepsilon_{\mathrm{span}}$-generated by $\GoalsAux$.
\end{assumption}

This assumption specifies what one-step distinctions the auxiliary goals can probe. The integral probability metric $d_{\mathcal{F}}$ is the local model-error metric used in Theorem~\ref{thm:decoder}; if $\mathcal{F}$ is too small, only the corresponding coarse aspects of the local transition kernel are identified.

\begin{assumption}[Planning stability under $d_{\mathcal{F}}$]
\label{ass:planning_stability}
There exists a nondecreasing function $\mathfrak{S}:\mathbb{R}_{\ge 0}\to\mathbb{R}_{\ge 0}$ with $\mathfrak{S}(0)=0$ such that the following holds. Let $\widehat P$ be any family of approximate local kernels indexed by reachable histories and actions. If, for some $\varepsilon\ge 0$,
\[
\sup_{\sigma\in\SigmaSet}\ 
\sup_{h_t\in\mathcal{H}^{\mathrm{reach}}_\sigma}\ 
\sup_{a\in\Act}
 d_{\mathcal{F}}\!\left(
\widehat P(\cdot\mid h_t,a),
P^\sigma(\cdot\mid s_t(h_t),a)
\right)
\le
\varepsilon,
\]
where $\mathcal{H}^{\mathrm{reach}}_\sigma$ is the set of histories reachable in domain $\sigma$, then, for every goal $\psi\in\GoalsN$, domain $\sigma$, reachable history $h_t\in\mathcal{H}^{\mathrm{reach}}_\sigma$, and action $a\in\Act$,
\[
\left|
\widehat Q^\star_{\psi,\widehat P}(h_t,a)
-
Q^\star_{\sigma,\psi}(s_t(h_t),a)
\right|
\le
\mathfrak{S}(\varepsilon).
\]
Here $\widehat Q^\star_{\psi,\widehat P}(h_t,a)$ is the optimal action-value obtained by Bellman planning from history $h_t$ using $\widehat P$ and utility $u_\psi$.
\end{assumption}

\begin{theorem}[Approximate decoder existence]
\label{thm:decoder}
Suppose Assumption~\ref{ass:test_rich} holds and memory is $\varepsilon_{\mathrm{val}}$-value-sufficient for $\GoalsAux$ in the sense of Definition~\ref{def:valuesufficient}. Then:

\begin{enumerate}[leftmargin=1.25em]
\item \textbf{Same memory implies similar local kernels.}
Let $h_t$ be a reachable history in domain $\sigma$, and let $\tilde h_{\tilde t}$ be a reachable history in domain $\tilde{\sigma}$. Define $m:=f(h_t)=f(\tilde h_{\tilde t})$, $s:=s_t(h_t)$, and $\tilde s:=s_{\tilde t}(\tilde h_{\tilde t})$. If the two histories have the same memory state $m$, then, for every action $a\in\Act$,
\[
d_{\mathcal{F}}\!\left(
P^\sigma(\cdot\mid s,a),
P^{\tilde{\sigma}}(\cdot\mid \tilde s,a)
\right)
\le
2(\varepsilon_{\mathrm{val}}+\varepsilon_{\mathrm{span}}).
\]

\item \textbf{A local transition decoder exists.}
There exists a decoder $D_P:\Mem\times\Act\to\mathcal{P}(\State)$ such that, for every domain $\sigma$, every reachable history $h_t$ in that domain, and every action $a\in\Act$,
\[
d_{\mathcal{F}}\!\left(
D_P(f(h_t),a),
P^\sigma(\cdot\mid s_t(h_t),a)
\right)
\le
2(\varepsilon_{\mathrm{val}}+\varepsilon_{\mathrm{span}}).
\]
For memory states that are never reached, $D_P$ may be defined arbitrarily.

\item \textbf{Decoded dynamics support approximate planning.}
If Assumption~\ref{ass:planning_stability} also holds, define $\widehat Q^\star_{\psi,D_P}(h_t,a)$ as the action-value assigned at reachable history $h_t$ by planning with decoded kernels $D_P(f(h_t),a)$ and utility $u_\psi$. Then, for every goal $\psi\in\GoalsN$, domain $\sigma$, reachable history $h_t$ in that domain, and action $a\in\Act$,
\[
\left|
\widehat Q^\star_{\psi,D_P}(h_t,a)
-
Q^\star_{\sigma,\psi}(s_t(h_t),a)
\right|
\le
\mathfrak{S}\!\left(
2(\varepsilon_{\mathrm{val}}+\varepsilon_{\mathrm{span}})
\right).
\]
\end{enumerate}
\end{theorem}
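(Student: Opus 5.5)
The proof runs through the three parts in order. Each part uses only Definitions~\ref{def:testclass} and~\ref{def:valuesufficient}, plus Assumption~\ref{ass:planning_stability} for the last step.

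\emph{Part 1.} Fix $\phi\in\mathcal{F}$ and let $\bar\psi_\phi$ and $W_\phi$ be the probe goal and proxy from Definition~\ref{def:testclass}. Both histories are reachable, so both final states $s$ and $\tilde s$ are reachable. Hence
\[
Q^\star_{\sigma,\bar\psi_\phi}(s,a)=\int W_\phi\,dP^\sigma(\cdot\mid s,a),
\qquad
Q^\star_{\tilde\sigma,\bar\psi_\phi}(\tilde s,a)=\int W_\phi\,dP^{\tilde\sigma}(\cdot\mid\tilde s,a).
\]
By value sufficiency, each of these lies within $\varepsilon_{\mathrm{val}}$ of the same number $q_M(m,\bar\psi_\phi,a)$, since $f(h_t)=f(\tilde h_{\tilde t})=m$. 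The triangle inequality therefore gives
\[
\left|\int W_\phi\,dP^\sigma-\int W_\phi\,dP^{\tilde\sigma}\right|\le 2\varepsilon_{\mathrm{val}}.
\]
Replacing $W_\phi$ by $\phi$ costs at most $\varepsilon_{\mathrm{span}}$ under each probability measure, because $\|W_\phi-\phi\|_\infty\le\varepsilon_{\mathrm{span}}$. This yields the bound $2\varepsilon_{\mathrm{val}}+2\varepsilon_{\mathrm{span}}$. Taking the supremum over $\phi$ proves the claim.

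\emph{Part 2.} Take $m$ to be a memory state attained by at least one reachable history, in any domain. Using the axiom of choice if necessary, select one representative reachable pair $(h^m,\sigma^m)$ with $f(h^m)=m$. Define
\[
D_P(m,a):=P^{\sigma^m}(\cdot\mid s(h^m),a),
\]
and define $D_P$ arbitrarily on memory states that are never attained. Now let $h_t$ be any reachable history in domain $\sigma$. Then $f(h_t)$ is attained, so Part 1 applies to the pair $h_t$ and $h^{f(h_t)}$ and gives exactly the stated bound. If a measurable selection is wanted, a measurability caveat may need to be mentioned. Otherwise this step is routine.

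\emph{Part 3.} Set $\widehat P(\cdot\mid h_t,a):=D_P(f(h_t),a)$. This is a family of kernels indexed by reachable histories and actions. By Part 2 it satisfies the hypothesis of Assumption~\ref{ass:planning_stability} with $\varepsilon=2(\varepsilon_{\mathrm{val}}+\varepsilon_{\mathrm{span}})$, uniformly over $\sigma$, $h_t$ and $a$. The conclusion of that assumption is then exactly the claimed bound on $\widehat Q^\star_{\psi,D_P}$.

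There is no real obstacle. The main point needing care is Part 1: both histories must be reachable, so that both final states are reachable and the identity in Definition~\ref{def:testclass} applies to each. The choice of representative in Part 2 is the only other point to handle carefully.
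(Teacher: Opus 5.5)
Your proposal is correct and follows the paper's proof essentially step for step. Part 1 uses the same probe identity plus value-sufficiency and triangle inequality; Part 2 picks one representative reachable history per attained memory state to define $D_P$; and Part 3 directly invokes Assumption~\ref{ass:planning_stability} with $\widehat P(\cdot\mid h_t,a)=D_P(f(h_t),a)$. Your choice of a representative pair $(h^m,\sigma^m)$ rather than a history alone handles histories reachable in several domains a little more cleanly, but it changes nothing of substance.
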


The decoder result is an identification-by-tests argument. Auxiliary goals recover expectations of proxy tests $W_\phi$ under the true interventional kernel. If memory is value-sufficient, equal memory states imply nearly equal expectations for all such proxies. Since these proxies uniformly approximate the tests in $\mathcal{F}$, the corresponding local kernels are close in $d_{\mathcal{F}}$. Thus memory supports an approximate decoder of $P^\sigma(\cdot\mid s,\doop{A=a})$ on reachable local situations, up to the distinctions measured by $\mathcal{F}$.
\newcommand{\PLACEHOLDER}[1]{\textbf{[\textcolor{red}{#1}]}}
\section{Experiments}
\label{sec:experiments}

We design experiments around a minimal controlled environment that satisfies Assumptions \ref{ass:shared_bottleneck} by construction, allowing us to test our theoretical claims directly. We ask three questions:
\begin{enumerate}[label=(\roman*),leftmargin=2em]
    \item Does uniform near-optimality \emph{require} memory, as Theorem~\ref{thm:separation} states?
    \item Is the required domain information \emph{represented separately} in memory at the bottleneck?
    \item Does a value-sufficient memory admit an approximate decoder of local interventional dynamics, as Theorem~\ref{thm:decoder} suggests?
\end{enumerate}

\subsection{The \textsc{ForkWorld} Environment}
\label{sec:fork_world}

\paragraph{Layout.}
\textsc{ForkWorld} is a T-junction gridworld (Figure~\ref{fig:fork_world}) where a corridor of length $c$ leads from the start cell $s_0$ to a fork (bottleneck) $s^\dagger$. From $s^\dagger$, an upper arm of length $\ell$ leads to $G_\text{up}$ and a lower arm leads to $G_\text{down}$. All movement is deterministic; the state is the agent's $(x, y)$ position.

\paragraph{Domain family.}
The domain latent $\sigma \in \Sigma = \{\textsc{normal}, \textsc{swapped}\}$ intervenes on the vertical action mapping:
\begin{equation}\notag
  \textsc{normal}: \; \texttt{up} \!\mapsto\! (0,{+}1),\; \texttt{down} \!\mapsto\! (0,{-}1)
  \qquad
  \textsc{swapped}: \; \texttt{up} \!\mapsto\! (0,{-}1),\; \texttt{down} \!\mapsto\! (0,{+}1).
\end{equation}
The domain is \emph{not} provided to the agent; it must be inferred from experienced transitions. Horizontal and stay actions are domain-invariant. 

\paragraph{Goal family and reward.}
The goal $\psi \in \Psi = \{\textsc{up},\textsc{down}\}$ specifies which endpoint to reach and is included in the observation. Horizontal and stay actions are domain-invariant. The reward function is goal-conditional and domain-invariant: reaching the correct endpoint yields $+1$; reaching the wrong endpoint or timing out yields $0$. Each transition is subject to a cost of $0.01$.

\paragraph{The four-way bottleneck.}
At $s^\dagger$, the optimal action depends jointly on the goal and the domain as illustrated in Figure~\ref{fig:fork_world}:
The optimal action sets are disjoint across domains for every goal ($G_\textsc{normal} \cap G_\textsc{swapped} = \emptyset$), satisfying Assumption~\ref{ass:shared_bottleneck}. The goal $\psi$ is observable, but the domain $\sigma$ is not. Any fork policy — including a uniformly random one — therefore achieves exactly $50\%$ success: for each goal, one of \{\texttt{up}, \texttt{down}\} is correct under $\sigma = \textsc{normal}$ and the other under $\sigma = \textsc{swapped}$, so guessing between them gives $50\%$ per goal and hence $50\%$ overall.
A memoryless agent cannot exceed this ceiling; one that fails to commit to either arm (e.g.\ by dithering at $s^\dagger$) can fall below it.


\subsection{Agents and Baselines}
\label{sec:agents}
We compare six agent configurations spanning the memory--oracle spectrum: \textbf{Memoryless DQN} \citep{mnih2015human}, a two-layer MLP that observes only the current state and goal $[x,y,\psi_{\mathrm{id}},1]$ and therefore cannot accumulate evidence about the hidden domain $\sigma$; \textbf{Stacked DQN}, the same MLP augmented with a bounded history window of width $k=3$ and one-hot previous actions; \textbf{DRQN (states)} \citep{hausknecht2015deep}, a GRU-based Deep Recurrent Q-Network that processes the full episode sequence using only state observations, with hidden state serving as implicit memory $M_t=f(h_t)$; \textbf{DRQN (states + actions)}, which additionally concatenates the previous action $a_{t-1}$ as a one-hot vector to expose transition information $(s_{t-1},a_{t-1},s_t)$ to the recurrent cell; \textbf{Oracle DQN}, a memoryless DQN given the domain label $\sigma_{\mathrm{id}}\in\{0,1\}$, establishing the
performance ceiling without memory when the domain is observed; and \textbf{Oracle DRQN}, which combines recurrence with the observed domain label and establishes the ceiling when both memory and $\sigma$ are available.

All agents use $\epsilon$-greedy exploration with linear decay ($\epsilon: 1.0 \to 0.05$ over $3{,}000$ episodes), Adam optimiser ($\text{lr}=10^{-3}$), $\gamma=0.99$, and are trained for $5{,}000$ episodes. DQN uses a transition replay buffer (capacity $20{,}000$); DRQN uses an episode replay buffer (capacity $2{,}000$ episodes) with segment sampling (active length $20$, burn-in $4$). All results are averaged over $8$ seeds; shaded regions show $\pm 1$ standard error.

\subsection{Memory is Necessary for Uniform Competence}
\label{sec:memory_necessary}
Figure~\ref{fig:learning_curves} shows success rates versus training steps for all six agents. Table~\ref{tab:final_success} reports the final evaluation success rate, broken down by (goal $\times$ domain) condition.

\begin{figure}[t]
  \centering
  \begin{subfigure}[t]{0.48\linewidth}
    \includegraphics[width=\linewidth]{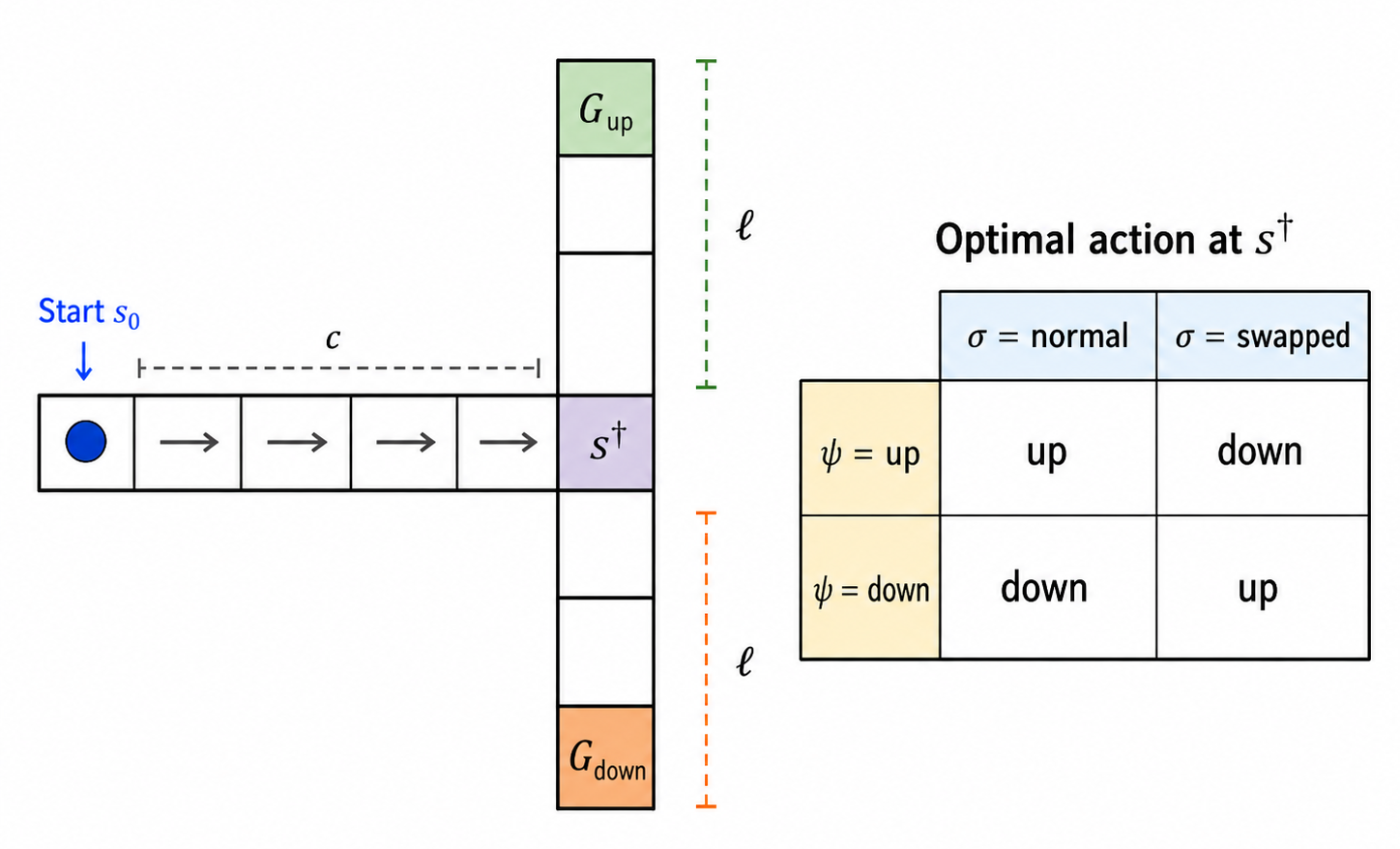}
    \caption{The \textsc{ForkWorld} environment.}
    \label{fig:fork_world}
  \end{subfigure}
  \hfill
  \begin{subfigure}[t]{0.48\linewidth}
    \includegraphics[width=\linewidth]{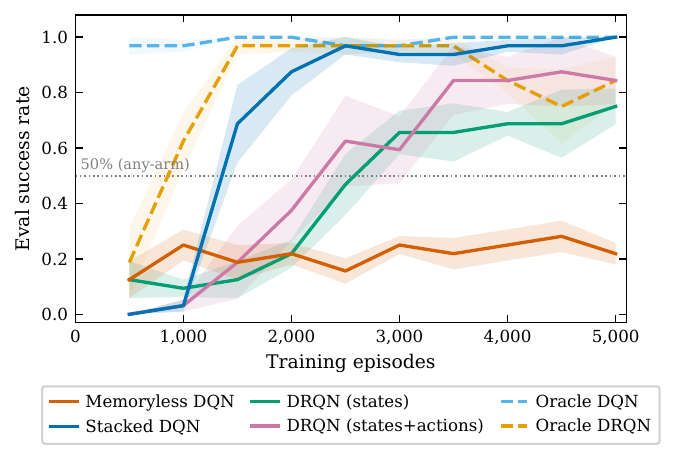}
    \caption{Eval. success rates (mean $\pm$ se) over training steps.}
    \label{fig:learning_curves}
  \end{subfigure}
  \caption{The environment and agent success rates over training.}
\end{figure}

\begin{table}[t]
  \centering
  \caption{\textbf{Final evaluation success rate} (mean $\pm$ 1\,SE across
    8 seeds), stratified by (goal $\times$ domain) condition.
    An agent solving all four conditions uniformly must infer the domain
    from memory.  N\,=\,normal, S\,=\,swapped, UP\,=\,up goal, DN\,=\,down goal.}
  \label{tab:final_success}
  \resizebox{\linewidth}{!}{%
  \begin{tabular}{lcccccc}
    \toprule
    & \multicolumn{4}{c}{Per-condition success rate (mean $\pm$ 1\,SE, 8 seeds)} & & \\
    \cmidrule(lr){2-5}
    Agent
      & $(\psi\!=\!\textsc{up},\,\sigma\!=\!\textsc{n})$
      & $(\psi\!=\!\textsc{up},\,\sigma\!=\!\textsc{s})$
      & $(\psi\!=\!\textsc{dn},\,\sigma\!=\!\textsc{n})$
      & $(\psi\!=\!\textsc{dn},\,\sigma\!=\!\textsc{s})$
      & Overall
      & $\Delta$ Oracle \\
    \midrule
    Memoryless DQN
      & $0.25 \pm 0.15$ & $0.12 \pm 0.12$
      & $0.25 \pm 0.15$ & $0.12 \pm 0.12$
      & $0.19 \pm 0.06$
      & $-0.81$ \\
    Stacked DQN ($k=3$, +actions)
      & $0.88 \pm 0.12$ & $0.88 \pm 0.12$
      & $0.88 \pm 0.12$ & $0.88 \pm 0.12$
      & $0.88 \pm 0.08$
      & $-0.12$ \\
    DRQN (states)
      & $0.75 \pm 0.15$ & $1.00 \pm 0.00$
      & $0.62 \pm 0.17$ & $0.62 \pm 0.17$
      & $0.75 \pm 0.09$
      & $-0.25$ \\
    DRQN (states + actions)
      & $0.88 \pm 0.12$ & $0.75 \pm 0.15$
      & $0.75 \pm 0.15$ & $0.75 \pm 0.15$
      & $0.78 \pm 0.12$
      & $-0.22$ \\
    \midrule
    Oracle DQN
      & $1.00 \pm 0.00$ & $1.00 \pm 0.00$
      & $1.00 \pm 0.00$ & $1.00 \pm 0.00$
      & $1.00 \pm 0.00$
      & --- \\
    Oracle DRQN
      & $0.75 \pm 0.15$ & $0.88 \pm 0.12$
      & $0.88 \pm 0.12$ & $0.62 \pm 0.17$
      & $0.78 \pm 0.12$
      & --- \\
    \bottomrule
  \end{tabular}}
  \vspace{-2ex}
\end{table}
The memoryless DQN plateaus near $0.19$ — well below the $0.50$ baseline achievable by any policy that commits to either arm. The failure is more severe than a simple domain-guessing deficit:
conflicting gradient signals at $s^\dagger$ (the same observation must support contradictory optimal actions across the two domains) prevent the Q-function from converging to a stable fork preference. The agent frequently dithers at $s^\dagger$ or retreats to the corridor, resulting in horizon-truncated episodes with no success reward. Theorem~\ref{thm:separation} predicts this: without domain-separated memory, the shared policy head cannot satisfy both domains' disjoint action sets simultaneously, and the resulting gradient conflict prevents even the trivial $50\%$ ceiling from being reached.

Memory-equipped agents learn to infer the domain from trajectory context and close 70--85\% of the gap to Oracle DQN ($1.00$) within $5{,}000$ episodes. Stacked DQN ($0.88$) matches or exceeds the DRQN variants ($0.75$--$0.78$) in overall success rate; the explicit $(s_{t-1}, a_{t-1}, s_t)$ window in the stacked representation is sufficient to distinguish the two arm
trajectories after a single probe. 

\subsection{Representation Separation at the Bottleneck}
\label{sec:rep_sep}
Theorem~\ref{thm:separation} predicts not just that memory agents succeed, but that their internal state must be \emph{TV-separated by domain} at $s^\dagger$. We test this directly with a linear probe.

After training, we run $200$ greedy rollout episodes per (goal $\times$ domain) condition ($800$ episodes total), recording the agent's internal representation $M_{t^\dagger}$ at every visit to the bottleneck. For DRQN this is the GRU hidden state; for DQN it is the last-hidden-layer activation. We fit a logistic regression\footnote{Balanced class weights to account for unequal visit counts across domains; evaluation by balanced-accuracy score (average per-class recall) with 5-fold CV.} to predict $\sigma \in \{\textsc{normal}, \textsc{swapped}\}$ from $M_{t^\dagger}$, and separately for first-time vs.\ subsequent visits to $s^\dagger$. Figure~\ref{fig:probe_visit} reports the results.

\emph{First visit.}
At the \emph{first} arrival at $s^\dagger$ — before the agent has entered either arm — domain classification accuracy is $0.48 \pm 0.00$ (SE) for \emph{all} four agents, indistinguishable from chance. This is the expected information-theoretic floor: the corridor provides no vertical-transition evidence, so $\sigma$ is unobservable.

\emph{Subsequent visits.}
After the agent enters an arm and returns, the picture changes sharply. For memory-equipped agents, domain accuracy on subsequent visits rises to $0.97 \pm 0.02$ (DRQN, states), $0.97 \pm 0.03$ (Stacked DQN, $k\!=\!3$), and $1.00 \pm 0.00$ (DRQN, states\,+\,actions). For the memoryless DQN it remains $0.50 \pm 0.00$ - exactly chance. The memory-equipped agents' representations therefore satisfy the TV-separation condition of Theorem~\ref{thm:separation} at $s^\dagger$ after the first arm probe, while the memory-less agent's representation is permanently domain-agnostic.

\begin{figure}[t]
  \centering
  \begin{minipage}[b]{0.45\linewidth}
    \centering
    \includegraphics[width=\linewidth]{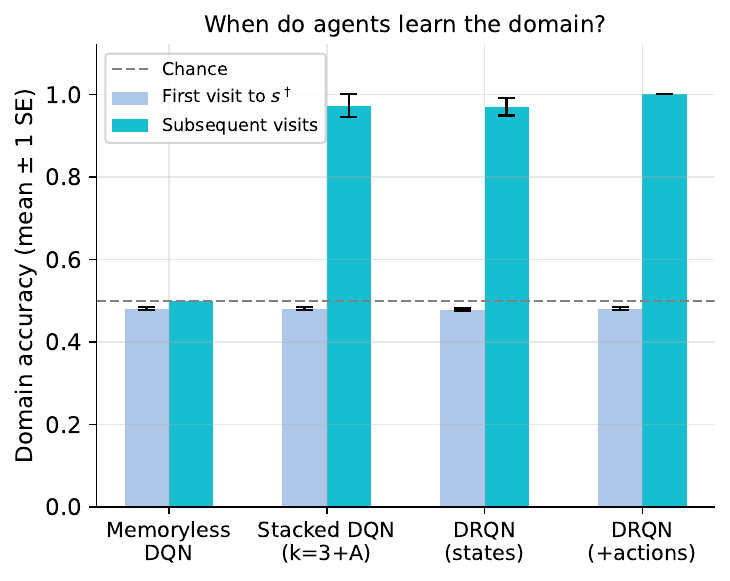}
    \captionof{figure}{\textbf{Domain probe accuracy by visit number.}
    On the first visit, all agents are at chance. On subsequent visits, DRQN
    representations become near-perfectly domain-separable, while the
    memoryless DQN remains at chance.}
    \label{fig:probe_visit}
  \end{minipage}
  \hfill 
  \begin{minipage}[b]{0.50\linewidth}
    \centering
  \centering
  \small
  \begin{tabular}{lcc}
    \toprule
    Agent & Decoder & Baseline \\
    \midrule
    DQN (memoryless)  & $0.28 \pm 0.08$ & $0.28$ \\
    Stacked DQN       & $0.33 \pm 0.01$ & $0.47$ \\
    DRQN (states)     & $0.17 \pm 0.03$ & $0.46$ \\
    DRQN (+actions)   & $0.28 \pm 0.04$ & $0.48$ \\
    Oracle DQN        & $0.00 \pm 0.00$ & $0.50$ \\
    Oracle DRQN       & $0.00 \pm 0.00$ & $0.52$ \\
    \bottomrule
  \end{tabular}
  \captionof{table}{\textbf{Transition decoder error at $s^\dagger$} (mean\,$\pm$\,1\,SE,
    8 seeds). Baseline: majority-class predictor ignoring $M_t$.
    Lower decoder error is better; gap vs.\ baseline measures improvement.}
  \label{tab:decoder}

  \end{minipage}
\end{figure}

\subsection{Memory Supports a Decodable World Model}
\label{sec:decoder}
Theorem~\ref{thm:decoder} predicts that value-sufficient memory admits an approximate decoder $D_P : \mathcal{M} \times \mathcal{A} \to \mathcal{P}(\mathcal{S})$ of the local interventional kernel.  We test this by fitting a transition decoder from the trained agent's representations.

We collect $(M_{t}, a_t, s_{t+1})$ tuples from 800 (200 episodes per (goal × domain) condition, 4 conditions) greedy rollout episodes.  A two-layer MLP decoder is trained to predict the next state $s_{t+1}$ from $(M_t, \text{onehot}(a_t))$, using a held-out $20\%$ split for evaluation. Prediction error is measured as the fraction of incorrectly predicted next states.

Table~\ref{tab:decoder} reports fork-state decoder error alongside the
majority-class baseline (predict the most frequent next state, ignoring $M_t$).
Because a more variable fork policy spreads transitions over more next-states,
baselines differ across agents; reduction relative to each agent's own baseline
is the correct comparison.
Oracle agents achieve $0.00$ error.
DRQN\,(states) reduces fork error from $0.46$ to $0.17$ ($63\%$), and
DRQN\,(+actions) from $0.48$ to $0.28$ ($41\%$), both consistent with
Theorem~\ref{thm:decoder}.
The memoryless DQN matches its baseline exactly --- zero reduction ---
confirming that $M_t$ carries no recoverable transition information at $s^\dagger$.
All-state errors are ${\leq}0.05$ across all agents (not shown), since
corridor transitions are domain-invariant.


\section{Discussion}\label{sec:discussion}

This paper shows that memory is necessary for generalist reinforcement learning when latent domains require different actions at the same observation. Unlike prior world-model necessity results, we do not assume the domain index is given; the agent must infer it from its own trajectory.
This is not a standard POMDP sufficiency claim. POMDP theory says a belief state can support optimal control; our result says any near-optimal learned memory must encode the hidden context when it is action-relevant. Thus, near-optimality itself forces memory separation.
Under suitable probe goals, this same memory also supports decoding of local interventional dynamics. Memory is therefore not just for choosing actions, but for carrying the information needed to reconstruct a local world model. The ForkWorld experiments illustrate these claims: memoryless agents fail under domain ambiguity, while memory-equipped agents separate latent domains and support transition decoding.

\newpage
\bibliographystyle{plainnat}
\bibliography{refs}
\appendix
\section*{Technical appendices and supplementary material}
\section{Proofs for Section~\ref{sec:method}}

\subsection{Proof of Theorem~\ref{thm:separation}}

\begin{lemma}[Action-set concentration at the bottleneck]
\label{lem:action_concentration}
Fix a pair $(\sigma,\tilde{\sigma})\in\mathcal{D}_{\mathrm{inc}}$, and let $\psi$, $s^\dagger$, $t_\sigma$, $t_{\tilde{\sigma}}$, $G_\sigma$, and $G_{\tilde{\sigma}}$ be the witnesses given by Assumption~\ref{ass:shared_bottleneck}. Under the uniform value-near-optimality hypothesis of Theorem~\ref{thm:separation}, for each $\xi\in\{\sigma,\tilde{\sigma}\}$ define
\[
\alpha_\xi(\cdot)
:=
\mathcal{L}^{\pi}_{\xi}(A_{t_\xi}\in\cdot\mid S_{t_\xi}=s^\dagger,S_0=s_0).
\]
Then
\[
\alpha_\xi(G_\xi)
\ge
1-
\frac{\varepsilon_{\mathrm{opt}}}{\rho\gamma^{t_\xi}\varepsilon_{\mathrm{gap}}}.
\]
\end{lemma}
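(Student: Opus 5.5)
The plan is a performance-difference argument that lower-bounds the regret of $\pi$ in domain $\xi$ by the mass it puts outside $G_\xi$ at the bottleneck, weighted by reach probability, discount and value gap. Fix $\xi\in\{\sigma,\tilde\sigma\}$ and write $t:=t_\xi$. Since $\varepsilon_{\mathrm{opt}}\le\bar\varepsilon_{\mathrm{opt}}$, the hypothesis of Theorem~\ref{thm:separation} applied to the goal $\psi$ gives $V^{\star,\psi}_\xi(s_0)-V^{\pi,\psi}_\xi(s_0)\le\varepsilon_{\mathrm{opt}}$. Part 1 of Assumption~\ref{ass:shared_bottleneck} then yields $p:=\Pr_\xi(S_t=s^\dagger\mid\pi,s_0)\ge\rho$, so the conditional law $\alpha_\xi$ is well defined.

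Next, construct a comparison policy $\pi'$. It agrees with $\pi$ at all times before $t$ and at every history whose time-$t$ state is not $s^\dagger$. On histories with $S_t=s^\dagger$, it plays at time $t$ an action in $G_\xi$ that is (near-)optimal for $Q^\star_{\xi,\psi}(s^\dagger,\cdot)$ and then follows a (near-)optimal continuation. Since $\pi'$ and $\pi$ coincide up to time $t$, the trajectory law up to $S_t$ is the same and rewards before $t$ cancel in expectation. The difference of values is therefore
\[
V^{\pi'}_\xi(s_0)-V^{\pi}_\xi(s_0)\ \ge\ \gamma^{t}\,p\;\E\!\left[\sup_{a'}Q^\star_{\xi,\psi}(s^\dagger,a')-Q^{\pi}_{h_t}(A_t)\,\middle|\,S_t=s^\dagger\right]-\delta,
\]
where $Q^\pi_{h_t}(a)$ is the continuation value of $\pi$ after $h_t$ and action $a$. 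Bound $Q^\pi_{h_t}(a)\le Q^\star_{\xi,\psi}(s^\dagger,a)$, which uses the Markov property of the fixed-domain process: the continuation from $(s^\dagger,a)$ cannot beat the optimal $Q$, since history-dependent policies cannot exceed the Markov optimum. Then use part 2 of Assumption~\ref{ass:shared_bottleneck}. Every $b\notin G_\xi$ satisfies $Q^\star_{\xi,\psi}(s^\dagger,b)\le \sup_{a\in G_\xi}Q^\star_{\xi,\psi}(s^\dagger,a)-\varepsilon_{\mathrm{gap}}$, and every action in $G_\xi$ contributes a nonnegative gap. The conditional expectation is therefore at least $\varepsilon_{\mathrm{gap}}(1-\alpha_\xi(G_\xi))$.

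Combining, $\varepsilon_{\mathrm{opt}}\ge V^{\star}-V^\pi\ge V^{\pi'}-V^\pi\ge \rho\gamma^{t}\varepsilon_{\mathrm{gap}}(1-\alpha_\xi(G_\xi))-\delta$. Letting the near-optimality slack $\delta\downarrow0$ and rearranging gives the claim.

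The main obstacle is the middle step: making the performance-difference decomposition rigorous when $\pi$ is history-dependent and the suprema need not be attained. I would handle attainment with $\delta$-optimal selections; this requires measurability of the selection, which is trivial if $\State,\Act$ are countable. For the history-dependence, I would condition on the whole history $h_t$ rather than only on $S_t=s^\dagger$. By the Markov property under $\doop{A_t=a}$, the best achievable continuation from $h_t$ equals $Q^\star_{\xi,\psi}(s^\dagger,a)$, after which the tower property averages over histories reaching $s^\dagger$ to produce $\alpha_\xi$.
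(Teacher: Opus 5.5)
Your proposal is correct and follows essentially the same route as the paper's proof. Both use a comparison policy that deviates from $\pi$ only on $\{S_{t_\xi}=s^\dagger\}$, where it plays a near-best action of $G_\xi$ followed by a near-optimal continuation; both then bound $\pi$'s continuation by $\int Q^\star_{\xi,\psi}(s^\dagger,a)\,\alpha_\xi(da)$, apply the value gap, multiply by $\rho\gamma^{t_\xi}$, and let the slack vanish. Your conditioning on the full history $h_t$ makes explicit a step the paper asserts without comment, and your measurable-selection concern is moot, since only one fixed state-action pair needs a near-optimal continuation policy.
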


\begin{proof}
Fix $\xi\in\{\sigma,\tilde{\sigma}\}$ and abbreviate $t:=t_\xi$ and $G:=G_\xi$. Since $\varepsilon_{\mathrm{opt}}\le\bar\varepsilon_{\mathrm{opt}}$, the theorem's value-near-optimality condition and Assumption~\ref{ass:shared_bottleneck}(1) imply
\[
\Pr_\xi(S_t=s^\dagger\mid \pi,s_0)\ge \rho.
\]

Fix $\nu>0$. Choose $a_\nu\in G$ whose value is within $\nu$ of the best value in $G$, so $Q^\star_{\xi,\psi}(s^\dagger,a_\nu)\ge \sup_{a\in G}Q^\star_{\xi,\psi}(s^\dagger,a)-\nu$. Construct a comparison policy $\hat\pi_\nu$ that agrees with $\pi$ up to time $t-1$. At time $t$, if $S_t=s^\dagger$, it takes $a_\nu$ and then follows a continuation policy within $\nu$ of optimal after taking $a_\nu$; outside $\{S_t=s^\dagger\}$, it behaves as $\pi$.

Conditioned on $S_t=s^\dagger$, the continuation value of $\pi$ from time $t$ onward is at most
\[
\int_{\Act} Q^\star_{\xi,\psi}(s^\dagger,a)\,\alpha_\xi(da),
\]
while $\hat\pi_\nu$ obtains at least $Q^\star_{\xi,\psi}(s^\dagger,a_\nu)-\nu$. For $a\in G$, the choice of $a_\nu$ gives $Q^\star_{\xi,\psi}(s^\dagger,a_\nu)\ge Q^\star_{\xi,\psi}(s^\dagger,a)-\nu$. For $b\notin G$, Assumption~\ref{ass:shared_bottleneck}(2) gives $Q^\star_{\xi,\psi}(s^\dagger,a_\nu)\ge Q^\star_{\xi,\psi}(s^\dagger,b)+\varepsilon_{\mathrm{gap}}-\nu$. Hence
\[
Q^\star_{\xi,\psi}(s^\dagger,a_\nu)
-
\int_{\Act}Q^\star_{\xi,\psi}(s^\dagger,a)\,\alpha_\xi(da)
\ge
\varepsilon_{\mathrm{gap}}\big(1-\alpha_\xi(G)\big)-\nu.
\]
Therefore,
\[
V^{\hat\pi_\nu,\psi}_{\xi}(s_0)-V^{\pi,\psi}_{\xi}(s_0)
\ge
\Pr_\xi(S_t=s^\dagger\mid \pi,s_0)\gamma^t
\Big(
\varepsilon_{\mathrm{gap}}\big(1-\alpha_\xi(G)\big)-2\nu
\Big).
\]
Since $V^{\star,\psi}_{\xi}(s_0)\ge V^{\hat\pi_\nu,\psi}_{\xi}(s_0)$ and $V^{\star,\psi}_{\xi}(s_0)-V^{\pi,\psi}_{\xi}(s_0)\le\varepsilon_{\mathrm{opt}}$,
\[
\varepsilon_{\mathrm{opt}}
\ge
\rho\gamma^t
\Big(
\varepsilon_{\mathrm{gap}}\big(1-\alpha_\xi(G)\big)-2\nu
\Big).
\]
Letting $\nu\downarrow 0$ yields the claim.
\end{proof}

\begin{proof}[Proof of Theorem~\ref{thm:separation}]
Fix $(\sigma,\tilde{\sigma})\in\mathcal{D}_{\mathrm{inc}}$, and let $\psi$, $s^\dagger$, $t_\sigma$, $t_{\tilde{\sigma}}$, $G_\sigma$, and $G_{\tilde{\sigma}}$ be the witnesses from Assumption~\ref{ass:shared_bottleneck}. Define
\[
\Lambda_\sigma
:=
\Pr_\sigma(M_{t_\sigma}\in\cdot\mid S_{t_\sigma}=s^\dagger,\pi,s_0),
\qquad
\Lambda_{\tilde{\sigma}}
:=
\Pr_{\tilde{\sigma}}(M_{t_{\tilde{\sigma}}}\in\cdot\mid S_{t_{\tilde{\sigma}}}=s^\dagger,\pi,s_0),
\]
and
\[
\eta_\sigma
:=
\frac{\varepsilon_{\mathrm{opt}}}{\rho\gamma^{t_\sigma}\varepsilon_{\mathrm{gap}}},
\qquad
\eta_{\tilde{\sigma}}
:=
\frac{\varepsilon_{\mathrm{opt}}}{\rho\gamma^{t_{\tilde{\sigma}}}\varepsilon_{\mathrm{gap}}}.
\]

Let $F(m):=\pi_M(G_\sigma\mid m,\psi)$, so $0\le F\le 1$. By the policy-head definition and Lemma~\ref{lem:action_concentration},
\[
\int F\,d\Lambda_\sigma
=
\Pr_\sigma(A_{t_\sigma}\in G_\sigma\mid S_{t_\sigma}=s^\dagger,\pi,s_0)
\ge
1-\eta_\sigma.
\]
Because $G_\sigma\cap G_{\tilde{\sigma}}=\emptyset$,
\[
\pi_M(G_\sigma\mid m,\psi)
\le
1-\pi_M(G_{\tilde{\sigma}}\mid m,\psi)
\qquad
\text{for every }m.
\]
Integrating with respect to $\Lambda_{\tilde{\sigma}}$ and applying Lemma~\ref{lem:action_concentration} again gives
\[
\int F\,d\Lambda_{\tilde{\sigma}}
\le
1-
\Pr_{\tilde{\sigma}}(A_{t_{\tilde{\sigma}}}\in G_{\tilde{\sigma}}\mid S_{t_{\tilde{\sigma}}}=s^\dagger,\pi,s_0)
\le
\eta_{\tilde{\sigma}}.
\]
Therefore,
\[
\int F\,d\Lambda_\sigma-
\int F\,d\Lambda_{\tilde{\sigma}}
\ge
1-\eta_\sigma-\eta_{\tilde{\sigma}}.
\]
Since $0\le F\le 1$, the variational characterization of total variation implies
\[
\mathrm{TV}(\Lambda_\sigma,\Lambda_{\tilde{\sigma}})
\ge
\left[
1-\eta_\sigma-\eta_{\tilde{\sigma}}
\right]_+.
\]
Finally, because $t_\sigma,t_{\tilde{\sigma}}\le T$,
\[
\eta_\sigma,\eta_{\tilde{\sigma}}
\le
\frac{\varepsilon_{\mathrm{opt}}}{\rho\gamma^T\varepsilon_{\mathrm{gap}}}.
\]
If $\varepsilon_{\mathrm{opt}}<\varepsilon_0:=\tfrac12\rho\gamma^T\varepsilon_{\mathrm{gap}}$, then $\eta_\sigma+\eta_{\tilde{\sigma}}<1$, so $\mathrm{TV}(\Lambda_\sigma,\Lambda_{\tilde{\sigma}})>0$. Hence the two conditional memory distributions differ.
\end{proof}

\subsection{Proof of Theorem~\ref{thm:decoder}}

\begin{proof}[Proof of Theorem~\ref{thm:decoder}]
We prove the three claims in order.

\paragraph{Part (1).}
Let $h_t$ be reachable in domain $\sigma$, and let $\tilde h_{\tilde t}$ be reachable in domain $\tilde{\sigma}$. Suppose $f(h_t)=f(\tilde h_{\tilde t})=:m$. Let $s:=s_t(h_t)$ and $\tilde s:=s_{\tilde t}(\tilde h_{\tilde t})$. Fix $a\in\Act$ and write
\[
K:=P^\sigma(\cdot\mid s,a),
\qquad
\tilde K:=P^{\tilde{\sigma}}(\cdot\mid \tilde s,a).
\]

Let $\phi\in\mathcal F$. Since $\mathcal F$ is $\varepsilon_{\mathrm{span}}$-generated by $\GoalsAux$, there exist $\bar\psi_\phi\in\GoalsAux$ and $W_\phi:\State\to[-1,1]$ with $\|W_\phi-\phi\|_\infty\le\varepsilon_{\mathrm{span}}$ and
\[
Q^\star_{\sigma,\bar\psi_\phi}(s,a)=\int W_\phi(s')\,K(ds'),
\qquad
Q^\star_{\tilde{\sigma},\bar\psi_\phi}(\tilde s,a)=\int W_\phi(s')\,\tilde K(ds').
\]
Value-sufficiency gives
\[
\left|q_M(m,\bar\psi_\phi,a)-Q^\star_{\sigma,\bar\psi_\phi}(s,a)\right|
\le
\varepsilon_{\mathrm{val}},
\qquad
\left|q_M(m,\bar\psi_\phi,a)-Q^\star_{\tilde{\sigma},\bar\psi_\phi}(\tilde s,a)\right|
\le
\varepsilon_{\mathrm{val}}.
\]
Thus,
\[
\left|
\int W_\phi\,dK-
\int W_\phi\,d\tilde K
\right|
\le
2\varepsilon_{\mathrm{val}}.
\]
Comparing $\phi$ and $W_\phi$,
\[
\left|
\int \phi\,dK-
\int \phi\,d\tilde K
\right|
\le
\left|\int(\phi-W_\phi)\,dK\right|
+
\left|\int W_\phi\,dK-\int W_\phi\,d\tilde K\right|
+
\left|\int(W_\phi-\phi)\,d\tilde K\right|
\le
2(\varepsilon_{\mathrm{val}}+\varepsilon_{\mathrm{span}}).
\]
Taking the supremum over $\phi\in\mathcal F$ yields
\[
d_{\mathcal F}(K,\tilde K)
\le
2(\varepsilon_{\mathrm{val}}+\varepsilon_{\mathrm{span}}).
\]

\paragraph{Part (2).}
Let
\[
\Mem_{\mathrm{reach}}
:=
\{f(h_t): h_t \text{ is reachable in some domain}\}.
\]
For each $m\in\Mem_{\mathrm{reach}}$, choose one representative reachable history $h^m$ with $f(h^m)=m$. Let $\sigma^m$ be its domain and $s^m$ its final state. Define
\[
D_P(m,a):=P^{\sigma^m}(\cdot\mid s^m,a),
\qquad
m\in\Mem_{\mathrm{reach}},\ a\in\Act.
\]
For unreachable memory states, define $D_P(m,a)$ arbitrarily.

Take any reachable history $h_t$ in domain $\sigma$, and let $s:=s_t(h_t)$ and $m:=f(h_t)$. The history $h_t$ and representative $h^m$ share the same memory state, so part (1) gives
\[
d_{\mathcal F}\big(D_P(m,a),P^\sigma(\cdot\mid s,a)\big)
\le
2(\varepsilon_{\mathrm{val}}+\varepsilon_{\mathrm{span}})
\qquad
\forall a\in\Act.
\]
Substituting $m=f(h_t)$ proves the decoder bound.

\paragraph{Part (3).}
Assume Assumption~\ref{ass:planning_stability}. Let $\varepsilon_D:=2(\varepsilon_{\mathrm{val}}+\varepsilon_{\mathrm{span}})$. By part (2), the decoded local kernel $D_P(f(h_t),a)$ is within $\varepsilon_D$ of $P^\sigma(\cdot\mid s_t(h_t),a)$ in $d_{\mathcal F}$ for every reachable $h_t$ and action $a$.

Define
\[
\widehat P_D(\cdot\mid h_t,a):=D_P(f(h_t),a).
\]
The bound from part (2) is exactly the premise of Assumption~\ref{ass:planning_stability} with $\widehat P=\widehat P_D$ and $\varepsilon=\varepsilon_D$. Therefore,
\[
\left|
\widehat Q^\star_{\psi,D_P}(h_t,a)
-
Q^\star_{\sigma,\psi}(s_t(h_t),a)
\right|
\le
\mathfrak S(\varepsilon_D)
\]
for every goal $\psi\in\GoalsN$, domain $\sigma$, reachable history $h_t$, and action $a$. Substituting $\varepsilon_D=2(\varepsilon_{\mathrm{val}}+\varepsilon_{\mathrm{span}})$ proves part (3), and hence the theorem.
\end{proof}
\newpage
\section{Implementation Details}
\label{app:implementation}

\subsection{Environment}
\label{app:env}

\textsc{ForkWorld} is a deterministic gridworld with corridor length $c=4$ and arm length $\ell=3$, giving a $4\times7$ grid of reachable cells.  The fork state $s^\dagger$ is at position $(3,3)$; the start state $s_0$ is at $(0,3)$; $G_\text{up}=(3,6)$ and $G_\text{down}=(3,0)$.  The action space has five discrete actions \{\texttt{up}, \texttt{down}, \texttt{left}, \texttt{right}, \texttt{stay}\}.  The observation is the vector $[x,\, y,\, \psi_\text{id},\, 1]^\top\in\mathbb{R}^4$, where $(x,y)$ is the agent's position, $\psi_\text{id}\in\{0,1\}$ encodes the goal, and the trailing $1$ serves as a bias input.  Oracle agents additionally receive $\sigma_\text{id}\in\{0,1\}$, giving a 5-dimensional observation $[x,\, y,\, \psi_\text{id},\, \sigma_\text{id},\, 1]^\top$.  Each episode samples $(\psi, \sigma)$ uniformly from the four combinations; evaluation is stratified over all four conditions (5 episodes each, 20 total) to eliminate sampling variance.  The episode horizon is $H=50$; reaching the correct goal yields reward $+1$, reaching the wrong goal or timing out yields $0$, and every transition incurs a step cost of $0.01$.

\subsection{Agent Architectures}
\label{app:architectures}

All Q-networks use a two-layer MLP head with hidden dimensions $[64, 64]$ and ReLU activations, outputting one logit per action.

\paragraph{Memoryless DQN.}  Processes only the current observation; network input is $\mathbb{R}^4$.

\paragraph{Stacked DQN ($k=3$, +actions).}  Maintains a sliding window of the last $k=3$ observations and $k-1=2$ interleaved one-hot previous actions.  The input dimension is $4k + 5(k-1) = 12 + 10 = 22$.  Both the current observation and the prior action are re-encoded at every step; the window is zero-padded at episode start.

\paragraph{DRQN (states).}  A single-layer GRU with hidden dimension $d=64$, consuming the raw 4-dimensional observation at each step.  The GRU hidden state serves as the implicit memory $M_t$.  The output head is a linear map $\mathbb{R}^{64}\to\mathbb{R}^5$.

\paragraph{DRQN (states + actions).}  Identical to DRQN (states) but concatenates the one-hot previous action $a_{t-1}\in\{0,1\}^5$ to the observation before the GRU, giving a 9-dimensional per-step input.

\paragraph{Oracle DQN.}  Memoryless MLP with a 5-dimensional input that includes $\sigma_\text{id}$.

\paragraph{Oracle DRQN.}  DRQN (states + actions) with a 10-dimensional per-step input (5-dimensional oracle observation concatenated with the one-hot previous action).

\subsection{Training Hyperparameters}
\label{app:hyperparams}

All agents share the same optimizer (AdamW, $\text{lr}=10^{-3}$), discount factor $\gamma=0.99$, $\varepsilon$-greedy schedule (linear from $\varepsilon=1.0$ to $\varepsilon=0.05$ over $3{,}000$ episodes, constant thereafter), and a total budget of $5{,}000$ episodes.  Evaluation uses $\varepsilon=0$ (greedy).  All results are averaged over 8 independent seeds.

\begin{table}[h]
  \centering
  \caption{Hyperparameters for DQN and DRQN variants.}
  \label{tab:hyperparams}
  \begin{tabular}{lcc}
    \toprule
    Hyperparameter & DQN variants & DRQN variants \\
    \midrule
    Hidden dims & $[64, 64]$ & --- \\
    GRU hidden dim & --- & $64$ \\
    Replay buffer & $20{,}000$ transitions & $2{,}000$ episodes \\
    Batch size & $64$ transitions & $32$ episodes \\
    Sequence length (active) & --- & $20$ steps \\
    Burn-in length & --- & $4$ steps \\
    Target network update & every $100$ updates & every $100$ updates \\
    Warmup (before training) & $200$ transitions & $50$ episodes \\
    Updates per env step & $1$ & $1$ \\
    \bottomrule
  \end{tabular}
\end{table}

For DRQN, each training sample is a segment of total length $\text{burn-in} + \text{seq\_len} + 1 = 25$ timesteps drawn from a uniformly sampled episode in the replay buffer; gradients are computed only over the active 20-step window.  Episodes shorter than 25 steps are zero-padded.

\subsection{Domain Probe}
\label{app:probe}

After training, we collect $200$ greedy rollout episodes per (goal $\times$ domain) condition ($800$ episodes total) and record the agent's internal representation $M_{t^\dagger}$ at every visit to $s^\dagger$.  For DRQN agents, $M_{t^\dagger}$ is the GRU hidden state $h_{t^\dagger}\in\mathbb{R}^{64}$; for DQN variants it is the penultimate-layer activation (after the second ReLU, before the output head), also in $\mathbb{R}^{64}$.

A logistic regression classifier (scikit-learn \texttt{LogisticRegression}; $C=1.0$; solver \texttt{lbfgs}; \texttt{max\_iter}=$1{,}000$; \texttt{class\_weight}=\texttt{balanced}) is fitted on standardised representations to predict $\sigma\in\{\textsc{normal},\textsc{swapped}\}$.  Performance is reported as balanced accuracy (mean per-class recall) under 5-fold stratified cross-validation.  Probes are run separately for first-time vs.\ subsequent visits to $s^\dagger$.

\subsection{Transition Decoder}
\label{app:decoder}

Using the same 800 greedy rollout episodes, we collect $(M_t, a_t, s_{t+1})$ tuples and train a two-layer MLP decoder to predict the next grid cell $s_{t+1}$ from $(M_t,\,\text{onehot}(a_t))$.  The decoder architecture is:
\[
  [\mathbb{R}^{64} \oplus \mathbb{R}^5] \xrightarrow{\text{Linear}} \mathbb{R}^{128} \xrightarrow{\text{ReLU}} \mathbb{R}^{128} \xrightarrow{\text{ReLU}} \mathbb{R}^{128} \xrightarrow{\text{Linear}} \mathbb{R}^{C},
\]
where $C=10$ is the number of reachable grid cells (4 corridor + 3 upper arm + 3 lower arm).  Training uses Adam ($\text{lr}=10^{-3}$), cross-entropy loss, batch size $256$, and $50$ epochs on an $80/20$ train/validation split.  Error is reported as the fraction of incorrectly predicted next cells on the held-out validation set.

\section{Compute}
Experimental environment is minimal in nature and all tasks were carried out on CPUs and one small GPU (RTX2080). Total Training/Post-Training time for each agent on GPU was 15-20 minutes.

\newpage
\section*{NeurIPS Paper Checklist}

The checklist is designed to encourage best practices for responsible machine learning research, addressing issues of reproducibility, transparency, research ethics, and societal impact. Do not remove the checklist: {\bf The papers not including the checklist will be desk rejected.} The checklist should follow the references and follow the (optional) supplemental material.  The checklist does NOT count towards the page
limit. 

Please read the checklist guidelines carefully for information on how to answer these questions. For each question in the checklist:
\begin{itemize}
    \item You should answer \answerYes{}, \answerNo{}, or \answerNA{}.
    \item \answerNA{} means either that the question is Not Applicable for that particular paper or the relevant information is Not Available.
    \item Please provide a short (1--2 sentence) justification right after your answer (even for \answerNA). 
\end{itemize}

{\bf The checklist answers are an integral part of your paper submission.} They are visible to the reviewers, area chairs, senior area chairs, and ethics reviewers. You will also be asked to include it (after eventual revisions) with the final version of your paper, and its final version will be published with the paper.

The reviewers of your paper will be asked to use the checklist as one of the factors in their evaluation. While \answerYes{} is generally preferable to \answerNo{}, it is perfectly acceptable to answer \answerNo{} provided a proper justification is given (e.g., error bars are not reported because it would be too computationally expensive'' or ``we were unable to find the license for the dataset we used''). In general, answering \answerNo{} or \answerNA{} is not grounds for rejection. While the questions are phrased in a binary way, we acknowledge that the true answer is often more nuanced, so please just use your best judgment and write a justification to elaborate. All supporting evidence can appear either in the main paper or the supplemental material, provided in appendix. If you answer \answerYes{} to a question, in the justification please point to the section(s) where related material for the question can be found.

IMPORTANT, please:
\begin{itemize}
    \item {\bf Delete this instruction block, but keep the section heading ``NeurIPS Paper Checklist"},
    \item  {\bf Keep the checklist subsection headings, questions/answers and guidelines below.}
    \item {\bf Do not modify the questions and only use the provided macros for your answers}.
\end{itemize}


\begin{enumerate}

\item {\bf Claims}
    \item[] Question: Do the main claims made in the abstract and introduction accurately reflect the paper's contributions and scope?
    \item[] Answer: \answerYes{} 
    \item[] Justification: Abstract and introduction accurately match the paper’s theoretical contributions (Theorems 1-2) and supporting experiments.
    \item[] Guidelines:
    \begin{itemize}
        \item The answer \answerNA{} means that the abstract and introduction do not include the claims made in the paper.
        \item The abstract and/or introduction should clearly state the claims made, including the contributions made in the paper and important assumptions and limitations. A \answerNo{} or \answerNA{} answer to this question will not be perceived well by the reviewers. 
        \item The claims made should match theoretical and experimental results, and reflect how much the results can be expected to generalize to other settings. 
        \item It is fine to include aspirational goals as motivation as long as it is clear that these goals are not attained by the paper. 
    \end{itemize}

\item {\bf Limitations}
    \item[] Question: Does the paper discuss the limitations of the work performed by the authors?
    \item[] Answer: \answerYes{} 
    \item[] Justification: Limitations discussed in Section~\ref{sec:discussion}.
    \item[] Guidelines:
    \begin{itemize}
        \item The answer \answerNA{} means that the paper has no limitation while the answer \answerNo{} means that the paper has limitations, but those are not discussed in the paper. 
        \item The authors are encouraged to create a separate ``Limitations'' section in their paper.
        \item The paper should point out any strong assumptions and how robust the results are to violations of these assumptions (e.g., independence assumptions, noiseless settings, model well-specification, asymptotic approximations only holding locally). The authors should reflect on how these assumptions might be violated in practice and what the implications would be.
        \item The authors should reflect on the scope of the claims made, e.g., if the approach was only tested on a few datasets or with a few runs. In general, empirical results often depend on implicit assumptions, which should be articulated.
        \item The authors should reflect on the factors that influence the performance of the approach. For example, a facial recognition algorithm may perform poorly when image resolution is low or images are taken in low lighting. Or a speech-to-text system might not be used reliably to provide closed captions for online lectures because it fails to handle technical jargon.
        \item The authors should discuss the computational efficiency of the proposed algorithms and how they scale with dataset size.
        \item If applicable, the authors should discuss possible limitations of their approach to address problems of privacy and fairness.
        \item While the authors might fear that complete honesty about limitations might be used by reviewers as grounds for rejection, a worse outcome might be that reviewers discover limitations that aren't acknowledged in the paper. The authors should use their best judgment and recognize that individual actions in favor of transparency play an important role in developing norms that preserve the integrity of the community. Reviewers will be specifically instructed to not penalize honesty concerning limitations.
    \end{itemize}

\item {\bf Theory assumptions and proofs}
    \item[] Question: For each theoretical result, does the paper provide the full set of assumptions and a complete (and correct) proof?
    \item[] Answer: \answerYes{} 
    \item[] Justification: All theoretical assumptions, definitions, theorems, and full proofs are stated in Section~\ref{sec:method} and Appendix.
    \item[] Guidelines:
    \begin{itemize}
        \item The answer \answerNA{} means that the paper does not include theoretical results. 
        \item All the theorems, formulas, and proofs in the paper should be numbered and cross-referenced.
        \item All assumptions should be clearly stated or referenced in the statement of any theorems.
        \item The proofs can either appear in the main paper or the supplemental material, but if they appear in the supplemental material, the authors are encouraged to provide a short proof sketch to provide intuition. 
        \item Inversely, any informal proof provided in the core of the paper should be complemented by formal proofs provided in appendix or supplemental material.
        \item Theorems and Lemmas that the proof relies upon should be properly referenced. 
    \end{itemize}

    \item {\bf Experimental result reproducibility}
    \item[] Question: Does the paper fully disclose all the information needed to reproduce the main experimental results of the paper to the extent that it affects the main claims and/or conclusions of the paper (regardless of whether the code and data are provided or not)?
    \item[] Answer: \answerYes{} 
    \item[] Justification: The environment, architectures, training setup, and evaluation protocol are described sufficiently to reproduce the main experiments.
    \item[] Guidelines:
    \begin{itemize}
        \item The answer \answerNA{} means that the paper does not include experiments.
        \item If the paper includes experiments, a \answerNo{} answer to this question will not be perceived well by the reviewers: Making the paper reproducible is important, regardless of whether the code and data are provided or not.
        \item If the contribution is a dataset and\slash or model, the authors should describe the steps taken to make their results reproducible or verifiable. 
        \item Depending on the contribution, reproducibility can be accomplished in various ways. For example, if the contribution is a novel architecture, describing the architecture fully might suffice, or if the contribution is a specific model and empirical evaluation, it may be necessary to either make it possible for others to replicate the model with the same dataset, or provide access to the model. In general. releasing code and data is often one good way to accomplish this, but reproducibility can also be provided via detailed instructions for how to replicate the results, access to a hosted model (e.g., in the case of a large language model), releasing of a model checkpoint, or other means that are appropriate to the research performed.
        \item While NeurIPS does not require releasing code, the conference does require all submissions to provide some reasonable avenue for reproducibility, which may depend on the nature of the contribution. For example
        \begin{enumerate}
            \item If the contribution is primarily a new algorithm, the paper should make it clear how to reproduce that algorithm.
            \item If the contribution is primarily a new model architecture, the paper should describe the architecture clearly and fully.
            \item If the contribution is a new model (e.g., a large language model), then there should either be a way to access this model for reproducing the results or a way to reproduce the model (e.g., with an open-source dataset or instructions for how to construct the dataset).
            \item We recognize that reproducibility may be tricky in some cases, in which case authors are welcome to describe the particular way they provide for reproducibility. In the case of closed-source models, it may be that access to the model is limited in some way (e.g., to registered users), but it should be possible for other researchers to have some path to reproducing or verifying the results.
        \end{enumerate}
    \end{itemize}

\item {\bf Open access to data and code}
    \item[] Question: Does the paper provide open access to the data and code, with sufficient instructions to faithfully reproduce the main experimental results, as described in supplemental material?
    \item[] Answer: \answerYes{} 
    \item[] Justification: Code submitted as part of supplementary material. 
    \item[] Guidelines:
    \begin{itemize}
        \item The answer \answerNA{} means that paper does not include experiments requiring code.
        \item Please see the NeurIPS code and data submission guidelines (\url{https://neurips.cc/public/guides/CodeSubmissionPolicy}) for more details.
        \item While we encourage the release of code and data, we understand that this might not be possible, so \answerNo{} is an acceptable answer. Papers cannot be rejected simply for not including code, unless this is central to the contribution (e.g., for a new open-source benchmark).
        \item The instructions should contain the exact command and environment needed to run to reproduce the results. See the NeurIPS code and data submission guidelines (\url{https://neurips.cc/public/guides/CodeSubmissionPolicy}) for more details.
        \item The authors should provide instructions on data access and preparation, including how to access the raw data, preprocessed data, intermediate data, and generated data, etc.
        \item The authors should provide scripts to reproduce all experimental results for the new proposed method and baselines. If only a subset of experiments are reproducible, they should state which ones are omitted from the script and why.
        \item At submission time, to preserve anonymity, the authors should release anonymized versions (if applicable).
        \item Providing as much information as possible in supplemental material (appended to the paper) is recommended, but including URLs to data and code is permitted.
    \end{itemize}

\item {\bf Experimental setting/details}
    \item[] Question: Does the paper specify all the training and test details (e.g., data splits, hyperparameters, how they were chosen, type of optimizer) necessary to understand the results?
    \item[] Answer: \answerYes{} 
    \item[] Justification: The paper specifies architectures, replay buffers, optimizers, learning rates, exploration schedules, seeds, and evaluation setup.
    \item[] Guidelines:
    \begin{itemize}
        \item The answer \answerNA{} means that the paper does not include experiments.
        \item The experimental setting should be presented in the core of the paper to a level of detail that is necessary to appreciate the results and make sense of them.
        \item The full details can be provided either with the code, in appendix, or as supplemental material.
    \end{itemize}

\item {\bf Experiment statistical significance}
    \item[] Question: Does the paper report error bars suitably and correctly defined or other appropriate information about the statistical significance of the experiments?
    \item[] Answer: \answerYes{} 
    \item[] Justification: Results are reported over 8 seeds with mean $\pm$ standard error and probe evaluation details are provided.
    \item[] Guidelines:
    \begin{itemize}
        \item The answer \answerNA{} means that the paper does not include experiments.
        \item The authors should answer \answerYes{} if the results are accompanied by error bars, confidence intervals, or statistical significance tests, at least for the experiments that support the main claims of the paper.
        \item The factors of variability that the error bars are capturing should be clearly stated (for example, train/test split, initialization, random drawing of some parameter, or overall run with given experimental conditions).
        \item The method for calculating the error bars should be explained (closed form formula, call to a library function, bootstrap, etc.)
        \item The assumptions made should be given (e.g., Normally distributed errors).
        \item It should be clear whether the error bar is the standard deviation or the standard error of the mean.
        \item It is OK to report 1-sigma error bars, but one should state it. The authors should preferably report a 2-sigma error bar than state that they have a 96\% CI, if the hypothesis of Normality of errors is not verified.
        \item For asymmetric distributions, the authors should be careful not to show in tables or figures symmetric error bars that would yield results that are out of range (e.g., negative error rates).
        \item If error bars are reported in tables or plots, the authors should explain in the text how they were calculated and reference the corresponding figures or tables in the text.
    \end{itemize}

\item {\bf Experiments compute resources}
    \item[] Question: For each experiment, does the paper provide sufficient information on the computer resources (type of compute workers, memory, time of execution) needed to reproduce the experiments?
    \item[] Answer: \answerYes{} 
    \item[] Justification: Compute details reported in Appendix Compute Section. 
    \item[] Guidelines:
    \begin{itemize}
        \item The answer \answerNA{} means that the paper does not include experiments.
        \item The paper should indicate the type of compute workers CPU or GPU, internal cluster, or cloud provider, including relevant memory and storage.
        \item The paper should provide the amount of compute required for each of the individual experimental runs as well as estimate the total compute. 
        \item The paper should disclose whether the full research project required more compute than the experiments reported in the paper (e.g., preliminary or failed experiments that didn't make it into the paper). 
    \end{itemize}
    
\item {\bf Code of ethics}
    \item[] Question: Does the research conducted in the paper conform, in every respect, with the NeurIPS Code of Ethics \url{https://neurips.cc/public/EthicsGuidelines}?
    \item[] Answer: \answerYes{} 
    \item[] Justification: The research is consistent with the NeurIPS Code of Ethics and involves no unethical data collection or deployment.
    \item[] Guidelines:
    \begin{itemize}
        \item The answer \answerNA{} means that the authors have not reviewed the NeurIPS Code of Ethics.
        \item If the authors answer \answerNo, they should explain the special circumstances that require a deviation from the Code of Ethics.
        \item The authors should make sure to preserve anonymity (e.g., if there is a special consideration due to laws or regulations in their jurisdiction).
    \end{itemize}

\item {\bf Broader impacts}
    \item[] Question: Does the paper discuss both potential positive societal impacts and negative societal impacts of the work performed?
    \item[] Answer: \answerNA{} 
    \item[] Justification: Not applicable; our work is a theoretical RL paper.
    \item[] Guidelines:
    \begin{itemize}
        \item The answer \answerNA{} means that there is no societal impact of the work performed.
        \item If the authors answer \answerNA{} or \answerNo, they should explain why their work has no societal impact or why the paper does not address societal impact.
        \item Examples of negative societal impacts include potential malicious or unintended uses (e.g., disinformation, generating fake profiles, surveillance), fairness considerations (e.g., deployment of technologies that could make decisions that unfairly impact specific groups), privacy considerations, and security considerations.
        \item The conference expects that many papers will be foundational research and not tied to particular applications, let alone deployments. However, if there is a direct path to any negative applications, the authors should point it out. For example, it is legitimate to point out that an improvement in the quality of generative models could be used to generate Deepfakes for disinformation. On the other hand, it is not needed to point out that a generic algorithm for optimizing neural networks could enable people to train models that generate Deepfakes faster.
        \item The authors should consider possible harms that could arise when the technology is being used as intended and functioning correctly, harms that could arise when the technology is being used as intended but gives incorrect results, and harms following from (intentional or unintentional) misuse of the technology.
        \item If there are negative societal impacts, the authors could also discuss possible mitigation strategies (e.g., gated release of models, providing defenses in addition to attacks, mechanisms for monitoring misuse, mechanisms to monitor how a system learns from feedback over time, improving the efficiency and accessibility of ML).
    \end{itemize}
    
\item {\bf Safeguards}
    \item[] Question: Does the paper describe safeguards that have been put in place for responsible release of data or models that have a high risk for misuse (e.g., pre-trained language models, image generators, or scraped datasets)?
    \item[] Answer: \answerNA{} 
    \item[] Justification: The work does not release high-risk generative models, scraped datasets, or dual-use systems requiring safeguards.
    \item[] Guidelines:
    \begin{itemize}
        \item The answer \answerNA{} means that the paper poses no such risks.
        \item Released models that have a high risk for misuse or dual-use should be released with necessary safeguards to allow for controlled use of the model, for example by requiring that users adhere to usage guidelines or restrictions to access the model or implementing safety filters. 
        \item Datasets that have been scraped from the Internet could pose safety risks. The authors should describe how they avoided releasing unsafe images.
        \item We recognize that providing effective safeguards is challenging, and many papers do not require this, but we encourage authors to take this into account and make a best faith effort.
    \end{itemize}

\item {\bf Licenses for existing assets}
    \item[] Question: Are the creators or original owners of assets (e.g., code, data, models), used in the paper, properly credited and are the license and terms of use explicitly mentioned and properly respected?
    \item[] Answer: \answerYes{} 
    \item[] Justification: Existing software libraries and prior assets are cited.
    \item[] Guidelines:
    \begin{itemize}
        \item The answer \answerNA{} means that the paper does not use existing assets.
        \item The authors should cite the original paper that produced the code package or dataset.
        \item The authors should state which version of the asset is used and, if possible, include a URL.
        \item The name of the license (e.g., CC-BY 4.0) should be included for each asset.
        \item For scraped data from a particular source (e.g., website), the copyright and terms of service of that source should be provided.
        \item If assets are released, the license, copyright information, and terms of use in the package should be provided. For popular datasets, \url{paperswithcode.com/datasets} has curated licenses for some datasets. Their licensing guide can help determine the license of a dataset.
        \item For existing datasets that are re-packaged, both the original license and the license of the derived asset (if it has changed) should be provided.
        \item If this information is not available online, the authors are encouraged to reach out to the asset's creators.
    \end{itemize}

\item {\bf New assets}
    \item[] Question: Are new assets introduced in the paper well documented and is the documentation provided alongside the assets?
    \item[] Answer: \answerNA{} 
    \item[] Justification: The paper does not currently release new datasets, models, or benchmark assets.
    \item[] Guidelines:
    \begin{itemize}
        \item The answer \answerNA{} means that the paper does not release new assets.
        \item Researchers should communicate the details of the dataset\slash code\slash model as part of their submissions via structured templates. This includes details about training, license, limitations, etc. 
        \item The paper should discuss whether and how consent was obtained from people whose asset is used.
        \item At submission time, remember to anonymize your assets (if applicable). You can either create an anonymized URL or include an anonymized zip file.
    \end{itemize}

\item {\bf Crowdsourcing and research with human subjects}
    \item[] Question: For crowdsourcing experiments and research with human subjects, does the paper include the full text of instructions given to participants and screenshots, if applicable, as well as details about compensation (if any)? 
    \item[] Answer: \answerNA{} 
    \item[] Justification: The work does not involve crowdsourcing or experiments with human participants.
    \item[] Guidelines:
    \begin{itemize}
        \item The answer \answerNA{} means that the paper does not involve crowdsourcing nor research with human subjects.
        \item Including this information in the supplemental material is fine, but if the main contribution of the paper involves human subjects, then as much detail as possible should be included in the main paper. 
        \item According to the NeurIPS Code of Ethics, workers involved in data collection, curation, or other labor should be paid at least the minimum wage in the country of the data collector. 
    \end{itemize}

\item {\bf Institutional review board (IRB) approvals or equivalent for research with human subjects}
    \item[] Question: Does the paper describe potential risks incurred by study participants, whether such risks were disclosed to the subjects, and whether Institutional Review Board (IRB) approvals (or an equivalent approval/review based on the requirements of your country or institution) were obtained?
    \item[] Answer: \answerNA{} 
    \item[] Justification: The work does not involve human subjects research requiring IRB review.
    \item[] Guidelines:
    \begin{itemize}
        \item The answer \answerNA{} means that the paper does not involve crowdsourcing nor research with human subjects.
        \item Depending on the country in which research is conducted, IRB approval (or equivalent) may be required for any human subjects research. If you obtained IRB approval, you should clearly state this in the paper. 
        \item We recognize that the procedures for this may vary significantly between institutions and locations, and we expect authors to adhere to the NeurIPS Code of Ethics and the guidelines for their institution. 
        \item For initial submissions, do not include any information that would break anonymity (if applicable), such as the institution conducting the review.
    \end{itemize}

\item {\bf Declaration of LLM usage}
    \item[] Question: Does the paper describe the usage of LLMs if it is an important, original, or non-standard component of the core methods in this research? Note that if the LLM is used only for writing, editing, or formatting purposes and does \emph{not} impact the core methodology, scientific rigor, or originality of the research, declaration is not required.
    \item[] Answer: \answerNA{} 
    \item[] Justification: LLMs are not part of the paper’s core methodology or experimental setup.
    \item[] Guidelines:
    \begin{itemize}
        \item The answer \answerNA{} means that the core method development in this research does not involve LLMs as any important, original, or non-standard components.
        \item Please refer to our LLM policy in the NeurIPS handbook for what should or should not be described.
    \end{itemize}

\end{enumerate}
\end{document}